\NewDocumentCommand{\evalat}{sO{\big}mm}{%
	\IfBooleanTF{#1}
	{\mleft. #3 \mright|_{#4}}
	{#3#2|_{#4}}%
}
\newlength{\arrow}
\DeclarePairedDelimiter\ceil{\lceil}{\rceil}
\DeclarePairedDelimiter\floor{\lfloor}{\rfloor}
\newcommand{\@giventhatstar}[2]{#1 \;\middle|\; #2}
\newcommand{\@giventhatnostar}[3][]{\;#1 #2\;#1|\;#3 #1}
\newcommand{\giventhat}{\@ifstar\@giventhatstar\@giventhatnostar}
\newcommand{\nn}{\nonumber}
\newcommand{\yay}[1]{\left(#1\right)}
\newcommand{\brac}[1]{\left[#1\right]}
\newcommand{\lb}{\left\{}
\newcommand{\rb}{\right\}}
\newcommand{\set}[1]{\lb#1\rb}
\newcommand{\abs}[1]{\left|#1\right|}
\newcommand{\norm}[1]{\left\| #1\right\|}
\newcommand{\pmf}[2]{p^{}_{#1}\left(#2\right)}
\newcommand{\prob}[1]{\mathbb{P}\left(#1\right)}
\newcommand{\condprob}[2]{\mathbb{P}\left(\giventhat*{#1}{#2}\right)}
\newcommand{\expt}[1]{\mathbb{E}\left[#1\right]}
\newcommand{\pp}[1]{\tilde{p}_{#1}}
\newcommand{\pq}[1]{\tilde{q}_{#1}}
\newcommand{\vecnot}[2]{\mathbf{#1}_{\setminus#2}}
\newcommand{\pest}[2]{\hat{\rho}_{#1}\left(#2\right)}
\newcommand{\pestvec}[1]{\hat{\bm{\rho}}\left(#1\right)}
\newcommand{\dvec}[1]{\bm{#1}}
\newcommand{\condexpt}[2]{\mathbb{E}\left[\giventhat*{#1}{#2}\right]}
\newcommand{\ind}[1]{\mathds{1}\left(#1\right)}
\newcommand{\sgn}[1]{\sign\left(\cdot\right)}
\newcommand{\rans}[1]{\mathcal{R}\left(\cdot\right)}
\newcommand{\nuls}[1]{\mathcal{N}\left(\cdot\right)}
\newcommand{\domain}[1]{\mathcal{D}\left(\cdot\right)}
\newcommand{\pn}[1]{p_{\setminus#1}}
\newcommand{\qn}[1]{q_{\setminus#1}}
\newcommand{\epsn}[1]{\varepsilon_{\setminus#1}}
\newcommand{\ppvec}[1]{\tilde{\bm{p}}_{#1}}
\newcommand{\V}[1]{V\left(#1\right)}
\newcommand{\xvec}[1]{\mathbf{x}_{#1}}
\newcommand{\Xvec}[1]{\mathbf{X}_{#1}}
\newcommand{\rvec}[1]{\mathbf{#1}}
\newcommand{\Xnot}[1]{\mathbf{X}_{\setminus{#1}}}
\newcommand{\xnot}[1]{\mathbf{x}_{\setminus{#1}}}
\newtheorem{lemma}{Lemma}
\newtheorem{corollary}{Corollary}
\newtheorem{thm}{Theorem}
\newtheorem{prop}{Proposition}
\newtheorem{definition}{Definition}
\DeclareMathOperator{\var}{Var}
\DeclareMathOperator{\sign}{sign}
\DeclareMathOperator*{\argmin}{arg\,min}
\DeclareMathOperator*{\argmax}{arg\,max}
\tikzstyle{circ}=[draw, circle, minimum size=2em]
\tikzstyle{box}=[draw, rectangle, minimum height=2em, minimum width=4em]
\tikzstyle{bigbox}=[draw, rectangle, minimum height=3em, minimum width=2em]
\tikzstyle{init} = [pin edge={to-,thin,black}]
\title{Unsupervised Opinion Aggregation -- A Statistical Perspective}
\author{Noyan~C.~Sev\"uktekin~and~Andrew~C.~Singer\\Department of Electrical and Computer Engineering\\
University of Illinois at Urbana-Champaign\\
acsinger@illinois.edu}%
\begin{document}
	\maketitle
	\begin{abstract}
		Complex decision-making systems rarely have direct access to the current state of the world and they instead rely on opinions to form an understanding of what the ground truth could be. Even in problems where experts provide opinions without any intention to manipulate the decision maker, it is challenging to decide which expert's opinion is more reliable -- a challenge that is further amplified when decision-maker has limited, delayed, or no access to the ground truth after the fact. This paper explores a statistical approach to infer the competence of each expert based on their opinions without any need for the ground truth. Echoing the logic behind what is commonly referred to as \textit{the wisdom of crowds}, we propose measuring the competence of each expert by their likeliness to agree with their peers. We further show that the more reliable an expert is the more likely it is that they agree with their peers. We leverage this fact to propose a completely unsupervised version of the na\"{i}ve Bayes classifier and show that the proposed technique is asymptotically optimal for a large class of problems. In addition to aggregating a large block of opinions, we further apply our technique for online opinion aggregation and for decision-making based on a limited the number of opinions.  
	\end{abstract}
	\begin{IEEEkeywords}
		Unsupervised opinion aggregation, statistical inference, stochastic experts.
	\end{IEEEkeywords}
	\section{Introduction}\label{SEC:Intro}
	\IEEEPARstart{M}{any} modern engineering systems have access to extrinsic information in the form of \textit{opinions} -- subjective evaluations of the current state of the world. Contrary to the conventional models of information flow capturing the uncertainty in tasks with quantifiable objectives such as measuring a physical quantity or transmitting information through a medium, opinions are generated for tasks with inherent subjectivity, thus providing limited insight on the underlying rationale or the proficiency of the opinion source.  
	
	Albeit a more unforgiving form of information for decision making, opinions are critical in high-risk applications including autonomous driving \cite{geiger2012we}, personalized medicine \cite{hamburg2010path}, and disaster detection \cite{bahrepour2010distributed}, where information is readily available from highly heterogeneous opinion sources, or \textit{experts}, while the true state of the world remains hidden, or unachievable. \textit{The internet of things} is conjectured to be particularly rich in subjective information as highly localized processing units under power, circuit area, and latency constraints are designed to collectively address global computational tasks \cite{zhang2018approaches}. Complicated physico-chemical processes, such as genomic data sequencing \cite{ochoa2016genomic}, are often subject to high-dimensional uncertainty, while the target efficiency is remarkably high. Similarly, human integration into labeling data exhibits a similar, difficult-to-model, behavior \cite{snow2008cheap}.
	
	Decision-making by consulting experts, or opinion aggregation, has far-reaching roots: Bayesian hypothesis testing sets the fundamental limits of opinion aggregation provided that the complete probability law governing the experts exists and is known to the decision maker \cite{poor2013introduction}. Nonetheless, it is difficult, if at all possible, to model reliably, over the entire application space, the probability law that underlies highly-specialized processing units, which are often trained on limited data. The modeling error and the concomitant performance degradation in opinion aggregation might prove to be uncontrollable in such cases. The uncertainty in modeling, however, does not necessarily render statistical inference of the underlying probability law implausible. Dempster-Schafer theory addresses the Bayesian inference problem under model uncertainty by incorporating \textit{belief} and \textit{plausibility} functions to substitute direct application of the probability law \cite{dempster1968generalization,shafer1976mathematical}. In general, use of approximate models for reasoning is called \textit{fuzzy logic} \cite{zadeh1988fuzzy} and it has a wide range of applications in control theory \cite{lee1990fuzzy}.
	
	In the absence of approximate or complete probability laws, \textit{feedback} is often used to learn and mitigate the reliability of each source and the concomitant probability distribution over opinions. The mixture of experts setup successfully aggregates opinions from potentially adversarial experts by the use of reliable past information \cite{cesa2006prediction}. Similarly, boosting and the associated meta-learning concepts use feedback, often at the expense of additional training data, to learn and use the reliability of different strategies, or classifiers \cite{schapire2012boosting}. The use of feedback is both the strength and the fundamental limitation of such strategies, as feedback is often expensive even when it is feasible to generate. The Bayesian ideas are called to action and a prior distribution on the underlying probability law is often \textit{assumed} when feedback is not feasible. Bayesian approach enables the rather powerful toolbox of iterative algorithms including belief propagation, expectation-maximization, and mean-field methods \cite{ok2016optimality,welinder2010online,zhang2014spectral,dawid1979maximum,baharad2011distilling}. Bayesian decision-aggregation methods are powerful to the extent with which the prior successfully captures reality.  
	
	The existence of reliable or approximate models, feedback, or prior information fundamentally defines the application space on which the associated ideas can be employed, and thus, are limited to such applications. As such, we may consider opinion-aggregation techniques that use some form of side information as \textit{supervised} rules. Conceptually, the presence of such side information renders an otherwise vital fact obsolete: Experts aim to achieve a common task, not to fail it, as long as a fixed probability law exists. The opinion-aggregation rules that rely solely on the existence of a probability law can therefore be considered \textit{unsupervised} rules. Often a community of experts is less subjective than its individual constituents, motivating statistical inference to compensate the lack of supervision. The fundamental challenge is to design unsupervised decision rules that reliably aggregate opinions without using any side information on the hidden-yet-fixed probability law governing the experts. 
	
	This paper addresses this challenge from an inherently statistical perspective. Section \ref{SEC:ProbDef} provides the formal background for opinion aggregation and identify different regimes of operation. Section \ref{SEC:PseudoComp} introduces a novel technique for estimating, dynamically in real-time, the reliability of each expert from a set of opinions and discusses the properties of the proposed method. The purpose of such reliability estimation is made clear by using these estimates to infer the unknown probability law. Section \ref{SEC:Naive} introduces the a sharp upper-bound on the minimum probability of error achievable when the probability law is known, improving upon the state-of-the-art. Section \ref{SEC:Pseudo_naive} proposes an unsupervised opinion aggregation rule based on the minimum probability of error rule that uses the unsupervised reliability estimates and investigate its fundamental limits. Section \ref{SEC:EmpiricalPseudoNaiveBayes} addresses empirical extensions that aggregate a fixed block of opinions as well as doing so adaptively in real-time. Experiments are given in  Section \ref{SEC:Exp}. Proofs are deferred to the appendix.
	\section{Notation, Background, and Problem Definition}
	\label{SEC:ProbDef}
	We first discuss the notation used in this paper, then formally define the concepts of opinions, experts, tasks, and opinion aggregation. We address different modes of unsupervised opinion aggregation separately.  
	%
	%
	%
	%
	%
	\subsection{Notation}
	A probability space is a triplet $\yay{\Omega, \mathscr{F}, \mathbb{P}}$, where $\Omega$ is the event space, $\mathscr{F}$ is the sigma-field defined on $\Omega$, and $\mathbb{P}$ is the probability measure. Random variables are denoted by capital letters and the corresponding samples are denoted by lowercase letters: $(X,x)$. Independent random variables $\yay{X_1, X_2}$ are denoted by $X_1 \perp X_2$ and conditionally independent random variables $\yay{X_1, X_2}$ conditioned on $Y$ are denoted by $X_1 - Y - X_2$. A random process is an indexed collection of random variables $\set{X(t) : t\in\mathbb{T}}$, where $\mathbb{T}$ is the index set with cardinality $\abs{\mathbb{T}}=T$. Expectation, conditional expectation, and conditional probability operators are $\expt{\cdot}$, $\condexpt{\cdot}{\cdot}$, and $\condprob{\cdot}{\cdot}$ respectively. Random vectors and corresponding samples are column vectors, denoted by bold letters: $\dvec{X} = \brac{X_1,\cdots, X_N}^{\top}$, $\dvec{x} = \brac{x_1,\cdots, x_N}^{\top}$. A vector that excludes a specific element $\yay{X_j,x_j}$ is denoted by $\yay{\Xnot{j},\xnot{j}}$. When a random process takes vector values, we use the notation $\yay{\dvec{X}\yay{t}, \dvec{x}\yay{t}}$, $t\in\mathbb{T}$. The corresponding random matrices are constructed explicitly: $\brac{\dvec{X}(1),\cdots,\dvec{X}(T)}$. The indicator function is denoted by $\ind{\cdot}$, where the domain is to be understood from context. We use  $[N] \triangleq \set{1,\cdots,N}$ to denote the natural numbers up to a finite limit. Standard Hilbert space notation applies, specifically $\abs{\cdot}$ denotes absolute value and $\norm{\cdot}_{p}$ denotes $p$-norm.
	%
	%
	%
	%
	%
	%
	\subsection{Background}
	
	Let a set of \textit{tasks} $\mathbb{T}$ for identifying hidden binary \textit{states} $Y(t)\in\set{-1,1}$, $\forall t\in\mathbb{T}$ (in the context of classification, $Y(t)$ is often called \textit{label or ground-truth} instead) be generated independently, $Y(t)\perp Y(\tau)$, $\forall t\neq \tau$, with a uniform prior:
	\begin{equation}
		\label{uniform_prior}
		\prob{Y(t)=1}=\prob{Y(t)=-1} = \nicefrac{1}{2}, \forall t\in\mathbb{T}.
	\end{equation}
	Even though there exists applications that do not admit the uniform prior, such as group testing \cite{dorfman1943}, where populations are severely skewed, \eqref{uniform_prior} is a common assumption in opinion aggregation problems \cite{berend2015finite}. If a non-uniform prior on $Y$ exists, standard Bayesian decision-making principles can be used to adapt the results presented in this paper \cite{poor2013introduction}.  
	
	Experts generate binary opinions $X_i  \in \set{-1,1}$, $\forall i\in[N]$, that identify the true state $Y(t)$ with some probability:
	\begin{equation}
		\label{true_competence}
		p_i \triangleq \prob{X_i(t)=Y(t)}, \forall t\in\mathbb{T}.
	\end{equation} 
	The \textit{true competence} $p_i$ of an expert is considered \textit{fixed} across tasks. Let there be $N$ experts and assume that opinions are \textit{generated} independently, which amounts to:  
	\begin{equation}
		\label{opinion_gen}
		X_i(t)-Y(t)-X_j(t) , \forall i\neq j \in [N],
	\end{equation}
	for every task $t\in\mathbb{T}$. One should note that \eqref{opinion_gen} does \textit{not} indicate that opinions are independent. Indeed, it is conceptually clear that opinions should be dependent for meaningful inference as an opinion is a subjective evaluation of the current state, not an arbitrary input. 
	
	Experts that can reliably be defined by a stochastic law, as done here via $\eqref{uniform_prior}-\eqref{opinion_gen}$, are sometimes called \textit{stochastic experts} to separate them from the more game-theoretic framework of \textit{adversarial experts} \cite{cesa2006prediction}, a similar distinction exists for multi-armed bandits \cite{bubeck2012regret}. Furthermore, when $p_i$ is a function of the underlying task space $\mathbb{T}$, different from how it is defined here, experts exhibit task-dependent competences. Neyman-Pearson formulation of binary hypothesis testing \cite{neyman1933ix} and the two-coin Dawid-Skene model \cite{dawid1979maximum}, are examples of expert competences changing over the task space $\mathbb{T}$. 
	
	Opinion-based systems are sometimes referred to as \textit{semi-supervised} systems due to the availability of experts and the concomitant subjective information \cite{murphy2012machine}. However, in the context of opinion aggregation, one may consider supervision as \textit{any} form of side-information that yields inference of the underlying probability law beyond the extent that  opinions alone would allow. Formally, a \textit{supervised} opinion-aggregation rule refers to a function $f\yay{\cdot}$ mapping a set of opinions $\dvec{X}\yay{t}$ to an estimate of the true state $Y(t)$:
	\begin{equation}
		\label{supervised}
		\hat{Y}\yay{t} = f\yay{\dvec{X}\yay{t}; \mathcal{S}},
	\end{equation}
	where $\hat{Y}(t) \in \set{-1,1}$ and $\mathcal{S}$ denotes some form of side information. Supervised opinion-aggregation methods require different forms of side-information $\mathcal{S}$: $\set{p_1,\cdots,p_N}$ for Binary hypothesis testing \cite{poor2013introduction}, subsets of $\set{y_1,\cdots,y_{t-1}}$ for boosting and mixture of experts \cite{cesa2006prediction, schapire2012boosting}, or a priori distribution $\pmf{P_1,\cdots,P_N}{\cdot}$ on competences for Bayesian techniques; \cite{ok2016optimality,welinder2010online,zhang2014spectral,dawid1979maximum,baharad2011distilling} are some of the prominent examples. 
	
	\textit{Unsupervised} opinion aggregation, on the other hand, refers to functions that only rely on the existence of the underlying probability law, here, for instance, characterized by \eqref{uniform_prior}-\eqref{opinion_gen}. As such, they assume the form:
	\begin{equation}
		\label{unsupervised}
		\hat{Y}\yay{t} = f\yay{\dvec{X}\yay{t}}.
	\end{equation}
	The most prominent example of unsupervised decision aggregation is \textit{majority voting}, which is commonly accepted as the \textit{baseline} for unsupervised techniques. A modern unsupervised technique is called spectral meta learner and it relies on the singular values of the empirical covariance matrix of a collection of opinions \cite{parisi2014ranking}. 
	
	Binary opinion-aggregation techniques often aim to minimize the probability of error:
	\begin{equation}
		\label{proberr}
		\min_{f\in\mathcal{D}} \prob{f\yay{\dvec{X}\yay{t}} \neq Y\yay{t}},
	\end{equation}
	where $\mathcal{D}$ is the family of admissible opinion-aggregation rules that depend on the problem setup. For instance, the admissible rules comprise all, potentially randomized, decision rules when competences are known, past-measurable decision rules when feedback from past tasks is available, and functions that directly map available opinions to decisions on the corresponding tasks in the absence of all side information. 
	
	Opinion-aggregation rules \eqref{supervised}-\eqref{unsupervised}, and the corresponding performance metric \eqref{proberr}, are written in the form of single-task opinion aggregation, where a set of opinions $\set{X_1(t),\cdots,X_N(t)}$ are used to make a decision $\hat{Y}(t)$. Next, we discuss different modes of operation for unsupervised opinion aggregation.   
	
	
	%
	%
	%
	%
	%
	\subsection{Unsupervised Opinion Aggregation}
	
	
	\textit{Unsupervised opinion aggregation} refers to employing a function $f(\cdot)$ of opinions $\set{X_i\yay{t} : \yay{i,t} \in \mathcal{N} \times \mathcal{T}}$ for some subset $\mathcal{N} \times \mathcal{T} \subset \mathbb{T}\times [N]$ to identify a set of hidden states $\set{Y(t): t\in \mathcal{T}}$, \textit{directly}:
	\begin{equation}
		\label{too_general}
		\set{Y(t): t\in \mathcal{T}} = f\yay{\set{X_i\yay{t} : \yay{i,t} \in \mathcal{N} \times \mathcal{T}}}
	\end{equation}
	Note that there is no side information $\mathcal{S}$ as an input to the decision rule and that the subset $\mathcal{N}\times \mathcal{T}$ determines the \textit{operational meaning} of the opinion-aggregation rule: The function $f\yay{\cdot}$ might be fixed for all tasks $\mathbb{T}$, such as majority voting \cite{marquis1785essai}, or it might change adaptively in tasks. Furthermore, $f\yay{\cdot}$ might process blocks of opinions (often iteratively and non-adaptively) \cite{ok2016optimality,welinder2010online,baharad2011distilling,dawid1979maximum,zhang2014spectral}. The formal definitions of unsupervised opinion-aggregation rules are discussed below.
	
	\subsubsection{Instantaneous Opinion Aggregation}
	A fixed function directly applicable to opinions $\set{X_i(t): i\in [N]}$ on any task $t\in\mathbb{T}$, is an instantaneous opinion-aggregation strategy. Conceptually, these rules do not require additional memory to store past opinions. Formally, they take form $\hat{Y}(t) = f\yay{\dvec{X}\yay{t}}$, as illustrated in figure \ref{fig:insta}.
	\begin{figure}[h!]
		\begin{tikzpicture}[node distance=1.5cm,auto,>=latex',thick]
			\node (init) [] {$Y(t)$};
			\node [box, right of=init, node distance = 1.75cm, text width= 1.5cm, align=center] (a) {Consulting Experts};
			\node (c) [right of=a, node distance=2.25cm] {$\begin{bmatrix}
					X_1(t) \\ \vdots \\ X_N\yay{t}
				\end{bmatrix}$};
			\node [box, right of=c, node distance=2.25cm, text width= 1.75cm, align=center] (d) {Aggregating Opinions $f(\cdot)$};
			\node (end) [right of=d, node distance= 1.75cm]{$\hat{Y}(t)$};
			\path[->] (init) edge node {} (a);
			\path[->] (a) edge node {} (c);
			\draw[->] (c) edge node {} (d) ;
			\draw[->] (d) edge node {} (end) ;
		\end{tikzpicture}
		\caption{Instantaneous Opinion Aggregation} \label{fig:insta}
	\end{figure}
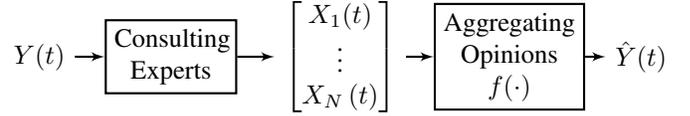

	It is often difficult to find meaningful unsupervised rules that are instantaneous, with a notable exception: Majority voting, denoted by $\V{\dvec{X}(t)}$, is a commonly-accepted baseline for unsupervised rules:
	\begin{equation}
		\label{def:MV}
		\V{\dvec{X}(t)} = \sign\yay{\sum_{i=1}^{N}X_i(t)},
	\end{equation}
	where ties are broken arbitrarily. It is often taken for granted that ties being broken arbitrarily is a direct consequence of Assumption \ref{uniform_prior}. Indeed, if a prior $\pmf{Y}{y}$ on $Y(t)$ were known, 
	\begin{equation*}
		\hat{Y} = \argmax_{y\in\set{-1,1}}\prob{Y=y}
	\end{equation*}
	should be chosen in the event of a tie \cite{poor2013introduction}. 
	
	Conjectures of Marquis de Condorc\'{e}t, \cite{marquis1785essai}, has long been debated in the social choice literature i.a. \cite{paroush1997stay,berend1998condorcet,ben2000nonasymptotic}, revealing that majority voting is not reliable when heterogeneous ($p_i\in[0,1]$) or arbitrarily weak ($p_i\rightarrow \nicefrac{1}{2}$) populations of experts are concerned.

	\subsubsection{Block(-Iterative) Opinion Aggregation}
	A strategy that processes a \textit{collection} of opinions $\set{X_i\yay{t} : \yay{i,t} \in \mathcal{N} \times \mathcal{T}}$ to estimate the corresponding states $\set{\hat{Y}(t): t\in\mathcal{T}}$ is a block opinion-aggregation rule. Conceptually, these rules, often iteratively, process past opinions to decide for the respective block of tasks. We focus on block rules that leverage \textit{all} the available opinions with $\mathcal{N}=[N]$ and $\mathcal{T} = \mathbb{T}$, as illustrated in figure \ref{fig:block}.
	
	\begin{figure}[h!]
		\begin{tikzpicture}[auto,>=latex', thick]
			\node (init) [] {$\begin{matrix}
					Y(1) & \cdots & Y(T)
				\end{matrix}$};
			\node [box, below of=init, node distance = 1.5cm, text width= 1.5cm, align=center] (a) {Consulting Experts};
			\node [right of=a, node distance=2.75cm] (ghost) {};
			\node (data) [below of=ghost, node distance=1cm] {$\begin{bmatrix}
					X_1(1) & \cdots & X_1(T)\\ \vdots& \ddots & \vdots \\ X_N(1) & \cdots & X_N(T)
				\end{bmatrix}$};
			\node [box, right of=ghost, node distance=3cm, text width= 2cm, align=center] (d) {Aggregating Opinions \\ $f(\cdot)$};
			\node (end) [above of=d, node distance= 1.5cm]{$\begin{matrix}
					\hat{Y}(1) & \cdots & \hat{Y}(T)
				\end{matrix}$};
			\node[draw, dashed, line width=0.7pt, fit=(a)(d)(data)](boxed){};  
			\draw[->] (init) edge node {} (a);
			\draw[->] (a.south) |- (data.west);
			\draw[->] (data.east) -|  (d.south) ;
			\draw[->] (d) edge node {} (end) ;
			\draw[dotted, ->] (init) -- node[below] {\textit{Block Inference}} (end);
		\end{tikzpicture}
		\caption{Block(-iterative) Opinion Aggregation} \label{fig:block}
	\end{figure}
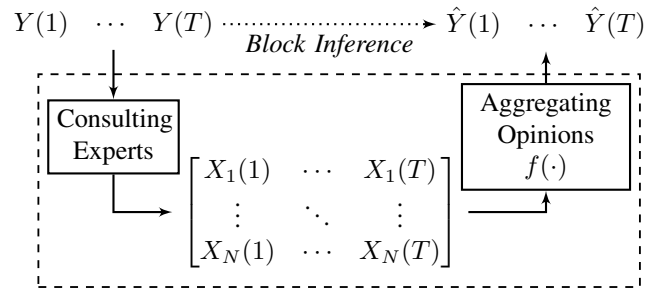
	Often, off-line techniques such as expectation maximization, belief propagation, or spectral decomposition methods are used iteratively on a collection of opinions \cite{ok2016optimality,welinder2010online,baharad2011distilling,dawid1979maximum,zhang2014spectral}. Specifically, belief propagation has been shown to asymptotically minimize the probability of error for specific sparse subsets $\mathcal{N}\subset [N]$, often referred to as task-assignment \cite{ok2016optimality}.  The singular value distribution of the empirical covariance matrix of opinions has also been investigated for opinion aggregation \cite{zhang2014spectral,parisi2014ranking}. Adaptations of expectation maximization algorithm have been proposed for adaptive block-processing and task-dependent modeling of competences   \cite{welinder2010online,dawid1979maximum}. These methods are generally computationally expensive and they seldom yield provable guarantees for their performance.   
	
	\subsubsection{Adaptive Opinion Aggregation}
	A strategy that infers the underlying probability law sequentially from past observations is an adaptive opinion-aggregation strategy. Formally, they have the form $\hat{Y}(t) = f_t\left(\mathbf{X}(t)\right)$, as illustrated in figure \ref{fig:adaptive}.
	\begin{figure}[h!]
		\begin{tikzpicture}[node distance=1.5cm,auto,>=latex',thick]
			\node (init) [anchor=west] {$\hspace{-2em}Y(t)$};
			\node [box, right of=init, node distance = 1.5cm, text width= 1.5cm, align=center] (a) {Consulting Experts};
			\node (c) [right of=a, node distance=2.25cm] {$\begin{bmatrix}
					X_1(t) \\ \vdots \\ X_N\yay{t}
				\end{bmatrix}$};
			\node [box, right of=c, node distance=2.25cm, text width= 1.75cm, align=center] (d) {Aggregating Opinions $f_t(\cdot)$};
			\node (end) [right of=d, node distance= 1.75cm]{$\hat{Y}(t)$};
			\node (past) [below of= init, node distance = 2cm] {$\begin{matrix}
					Y(t-1) \\ \vdots \\ Y(1)
				\end{matrix}$};
			\node (memory) [right of=past, node distance=3.5cm] {$\begin{bmatrix}
					X_1(1) & \cdots & X_1(t-1)\\ \vdots& \ddots & \vdots \\ X_N(1) & \cdots & X_N(t-1)
				\end{bmatrix}$}; 
			\node[draw, dashed, line width=0.7pt, fit=(past)(memory)](boxedpast){};  
			\node [right of= boxedpast, node distance= 4.5cm, text width= 1.5cm, align = center] () {\textit{Memory or Inference}};
			\path[->] (init) edge node {} (a);
			\path[->] (a) edge node {} (c);
			\draw[->] (c) edge node {} (d);
			\draw[->] (d) edge node {} (end);
			\draw[->] (memory.east) -|  (d.south);
			\draw[->] (past) edge node {} (memory) ;
		\end{tikzpicture}
		\caption{Adaptive Opinion Aggregation} \label{fig:adaptive}
	\end{figure}
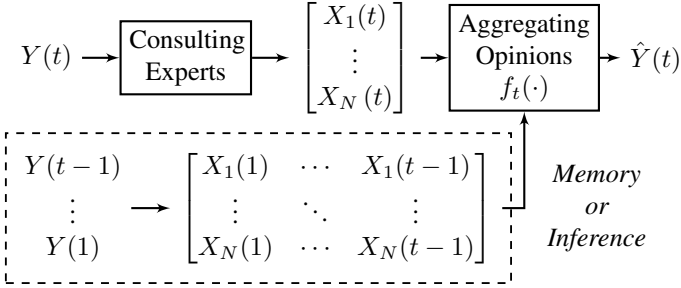
	
	These strategies often estimate and employ empirical competence estimates $\set{\hat{p}_1,\cdots,\hat{p}_N}$ in the decision making process and, traditionally, they are almost exclusively formulated as supervised opinion-aggregation strategies that have access to state feedback or additional ``meta''-training \cite{cesa2006prediction,schapire2012boosting,berend2015finite}. 
	
	This paper proposes a set of non-iterative, unsupervised decision-aggregation strategies with quantifiable performance guarantees. Inspired from the na\"{i}ve Bayes decision rule, which is the likelihood ratio test for known competences $\set{p_1,\cdots,p_N}$, \cite{berend2015finite,poor2013introduction}, we propose block-decision aggregation rules and discuss its adaptive extension. The proposed rules employ biased estimates of expert competences, called \textit{pseudo competences}, directly.   
	\section{Unsupervised Estimation of Competences}\label{SEC:PseudoComp}
	
	Let the experts $\set{X_1,\cdots, X_N}$ be characterized by the probability law \eqref{uniform_prior}-\eqref{opinion_gen}. Then, the true competence $p_i$ of an expert can objectively be measured by the frequency with which the expert successfully identifies the true state:
	\begin{equation}
		\label{emp_est}
		\hat{p}_i(T) = \frac{1}{T} \sum_{t=1}^{T} \ind{X_i(t)=Y(t)}.
	\end{equation}
	The ergodicity of the process $\ind{X_i(t)=Y(t)}$, which follows from \eqref{uniform_prior}-\eqref{opinion_gen}, yields that $\hat{p}_i(T) \rightarrow p_i$ as the number of tasks increases. 
	However, an unsupervised decision maker does not have access to $Y\yay{t}$ and therefore, cannot make use of these reliable competence estimates $\hat{p}_i(T)$ for decision making.  
	
	Conceptually, measuring the quality of an opinion without knowing the true state, or the ground-truth, is a commonly-encountered challenge in human decision-making: One might accept the consensus of extrinsic opinions on a task as the truth to the best of one's knowledge. We define a form of opinion-based reliability, or the \textit{pseudo competence} of an expert, as the likelihood of an expert agreeing with independently-generated opinions from other experts:
	\begin{equation}
		\label{pseudocomp}
		\pp{i} \triangleq \prob{X_i(t) = \V{\Xnot{i}(t)}}, \forall t\in\mathbb{T}.
	\end{equation}
	As discussed in Section \ref{SEC:ProbDef}, majority vote is an intuitive decision rule that is often accepted as the baseline and it leads to the notion of \textit{agreeing with peers}. Formally, the subset $\Xnot{i}$ are the \textit{peers} of the expert $X_i$ and their collective competence under majority vote is denoted by: 
	\begin{equation*}
		\pn{i} \triangleq \prob{Y(t) = \V{\vecnot{X}{i}(t)}}, \forall t\in\mathbb{T}.
	\end{equation*}
	It follows from the law of total probability that pseudo competence is a function of the entire committee $\set{p_1,\cdots,p_N}$ \cite{hajek2015random}:
	\begin{equation}
		\label{pseudo_total_law}
		\pp{i} = p_i\pn{i} + q_i \qn{i},
	\end{equation}
	where, $\qn{i} = 1-\pn{i}$. If one \textit{chooses} to accept the frequency with which an expert agrees with peers as the empirical competence estimate for that expert:
	\begin{equation}
		\label{empiricalpseudo}
		\pest{i}{T} = \frac{1}{T} \sum_{t=1}^{T} \ind{x_i(t) = \V{\xvec{\setminus i}(t)}}
	\end{equation}
	then, it is possible to infer competences, in the form of pseudo competences, in real-time. Similar to the true competence estimates, the ergodicity of the process $\ind{X_i(t) = \V{\Xvec{\setminus i}(t)}}$, which follows \eqref{uniform_prior}-\eqref{opinion_gen}, yields that $\pest{i}{T}\rightarrow \pp{i}$ as the number of tasks increases.  Furthermore, \eqref{empiricalpseudo} enables distributed estimation and ranking of competences on strongly connected networks \cite{sevuktekin2019distributed}. 
	
	The key aspects of the pseudo competence are the exclusion of self-opinions $\Xnot{i}$ and the concomitant competence of the peers $\pn{i}$. Exclusion of self-opinions is rather intuitive as an expert always agrees with itself, hence including self-opinions would bias the pseudo competence towards the expert that is being measured. The collective expertise of the peers on the other hand, is critical for pseudo competence to make sense. Conceptually, as one should not measure the competence of an expert with the likeliness of failure, one should not measure it by the likelihood of agreement with those who fail often. 
	
	Nonetheless, it is clear from \eqref{pseudo_total_law} that $\pn{i} > \nicefrac{1}{2}$, $\forall i\in[N]$ would ensure meaningful inference in the sense that pseudo competence preserves ordering and the sign of $p_i-\nicefrac{1}{2}$. If in addition, $\pn{i}\approx\pn{j}$, $\forall i\neq j$, one could also expect $\abs{\pp{i}-p_i}$ to scale similarly. 
	It is, however, not clear whether an arbitrary committee would satisfy either of these conditions. Next, we address what we call good committees $(p_i> \nicefrac{1}{2}, \forall i\in [N])$ and mixed committees $(p_i \in [0,1], \forall i\in [N])$ and explore the conditions that would yield reliable and meaningful inference.  
	\subsection{Properties of Pseudo Competence for Good Committees}
	We call a committee \textit{good} when every expert is sufficiently competent, $p_i> \nicefrac{1}{2}$, $\forall i\in[N]$ for a finite $N$, and consequently, when every expert has collectively competent peers, $\pn{i}> \nicefrac{1}{2}$, $\forall i\in[N]$. It is often mistaken that for countably-infinite committees $\set{p_i : i\in \mathbb{N}}$ that $p_i>\nicefrac{1}{2}, \forall i$, is sufficient for $\pn{i}>\nicefrac{1}{2}$, $\forall i$; a counter example is given in \cite[pg. 17]{paroush1997stay}. Therefore, for countably-infinite good committees, we allow that $p_i>\nicefrac{1}{2}+\varepsilon$ for some $\varepsilon\in \yay{0,\nicefrac{1}{2}}$ for every expert. Proposition \ref{pseudoprop} summarizes the key properties of pseudo competence for a good committee. 
	\begin{prop}
		\label{pseudoprop}
		For every good committee of size $N>2$ pseudo competences satisfy:
		\begin{enumerate}
			\item Ordering: $p_i>p_j \iff \pp{i}>\pp{j}$,
			\item Under-estimation: $\nicefrac{1}{2}<\pp{i}<p_i$.
		\end{enumerate} 
	\end{prop}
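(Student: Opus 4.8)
The plan is to reduce both parts to a single algebraic identity for $\pp i$ together with one monotonicity fact about majority voting. First I would recast the law of total probability \eqref{pseudo_total_law}: substituting $q_i=1-p_i$ and $\qn i=1-\pn i$ and expanding gives the compact form
\begin{equation}\label{eq:keyid}
\pp i-\tfrac12 \;=\; 2\bigl(p_i-\tfrac12\bigr)\bigl(\pn i-\tfrac12\bigr).
\end{equation}
A good committee has $p_i>\tfrac12$ and $\pn i>\tfrac12$, so \eqref{eq:keyid} at once gives $\pp i>\tfrac12$; and expert $i$ has $N-1\ge2$ fallible peers, all of whom err on the same task with positive probability, so $\pn i<1$, whence $\pn i-\tfrac12<\tfrac12$ and \eqref{eq:keyid} gives $\pp i-\tfrac12<2\bigl(p_i-\tfrac12\bigr)\cdot\tfrac12=p_i-\tfrac12$. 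That proves part~(2).

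For the ordering I would isolate the peers shared by $i$ and $j$, namely $C\triangleq[N]\setminus\{i,j\}$, which is nonempty because $N>2$, and condition on $Y$ and on $\mathbf X_{C}$: the only randomness then left in $\V{\Xnot i}=\V{\mathbf X_{C},X_j}$ is $X_j$ (and the tie-break coin). Let $\alpha_C$ (resp.\ $\beta_C$) be the probability that the majority over $C$ is correct once one extra vote, itself correct (resp.\ incorrect), is adjoined; by conditional independence and the symmetry of the prior these depend only on the sub-committee $C$, not on $p_i$ or $p_j$, and
\begin{equation}\label{eq:affine}
\pn i=p_j\,\alpha_C+(1-p_j)\,\beta_C,\qquad \pn j=p_i\,\alpha_C+(1-p_i)\,\beta_C .
\end{equation}
Substituting \eqref{eq:affine} into \eqref{eq:keyid} and subtracting, the term symmetric in $i$ and $j$ cancels and one is left with
\begin{equation}\label{eq:diff}
\pp i-\pp j \;=\; 2\mu_C\,(p_i-p_j),\qquad \mu_C\triangleq\tfrac{\alpha_C+\beta_C}{2}-\tfrac12 .
\end{equation}
Now $\tfrac{\alpha_C+\beta_C}{2}$ is exactly the probability that the majority over $C$ adjoined with one extra, uninformative ``expert'' — an independent fair coin $Z$ — is correct, so $\mu_C=\prob{Y=\V{\mathbf X_{C},Z}}-\tfrac12$, and part~(1) is equivalent to the single inequality $\mu_C>0$.

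Everything therefore comes down to the following monotonicity fact, which I expect to be the crux: in any finite committee in which every member has competence at least $\tfrac12$ and at least one strictly more than $\tfrac12$, the fairly tie-broken majority vote is correct with probability strictly more than $\tfrac12$. Applied to $C\cup\{Z\}$ — nonempty, with every member of $C$ strictly competent — this yields $\mu_C>0$ and hence, through \eqref{eq:diff}, part~(1); applied to $[N]\setminus\{\ell\}$ it also supplies the $\pn\ell>\tfrac12$ used above. I would prove it by a stochastic-dominance coupling: conditioned on $Y=1$, realize each vote $X_\ell$ jointly with an independent uniform sign $\xi_\ell$ such that $X_\ell\ge\xi_\ell$ surely (set $X_\ell=1$ if $\xi_\ell=1$, and if $\xi_\ell=-1$ let $X_\ell=1$ with probability $2p_\ell-1\ge0$). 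Then $W\triangleq\sum_\ell X_\ell$ dominates the symmetric $W_0\triangleq\sum_\ell\xi_\ell$, so $\{W<0\}\subseteq\{W_0<0\}$ and $\{W_0>0\}\subseteq\{W>0\}$; on a positive-probability configuration of the coins $W\ge0\ge W_0$ with $(W,W_0)\neq(0,0)$ (the strictly-competent votes drag $W$ across zero), so at least one containment is strict, and with $\prob{W_0<0}=\prob{W_0>0}$ this gives $\prob{W<0}<\prob{W>0}$. Since the majority is correct with probability $\prob{W>0}+\tfrac12\prob{W=0}$, which exceeds $\tfrac12$ precisely when $\prob{W>0}>\prob{W<0}$, the fact follows. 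The only delicate points are the tie-breaking conventions entering $\alpha_C,\beta_C$ and the majority competences, and the strictness in the coupling; note that the plausible $\alpha_C\ge\beta_C$ is not actually needed, as only $\mu_C>0$ enters \eqref{eq:diff}.
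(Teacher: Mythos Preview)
Your argument is correct. Part~(2) is handled exactly as the paper does it: the paper writes $\pp i-p_i=(1-2p_i)(1-\pn i)$ and $\pp i-\tfrac12=(p_i-\tfrac12)(2\pn i-1)$, which is your identity \eqref{eq:keyid} repackaged. For Part~(1) you and the paper arrive at the \emph{same} intermediate quantity---the paper's expression
\[
\frac{\pp i-\pp j}{p_i-p_j}=\condprob{\V{\Xnot j}=Y}{X_i\neq Y}-\condprob{\V{\Xnot j}\neq Y}{X_i=Y}
\]
is precisely $\beta_C-(1-\alpha_C)=\alpha_C+\beta_C-1=2\mu_C$---but you diverge in how you show $\mu_C>0$. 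The paper differentiates the conditional probabilities with respect to each $p_k$, $k\in C$, shows each derivative has the right sign, and then evaluates the limit $\mathbf p\to\tfrac12\mathbf 1$ to conclude the ratio is strictly positive off that boundary. Your route is more probabilistic: you recognise $(\alpha_C+\beta_C)/2$ as the success probability of the fairly tie-broken majority over $C\cup\{Z\}$ with $Z$ a fair coin, and then obtain $\mu_C>0$ from a stochastic-dominance coupling. This buys you a clean Condorcet-type interpretation of the key constant and avoids any calculus; the paper's derivative computation, on the other hand, is what later feeds into the mixed-committee extension (Proposition~\ref{pseudopropmixed}), where one wants exactly the sign of that difference of conditional probabilities for committees that are only eventually reliable. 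Both arguments are short and self-contained; yours is arguably the more elementary of the two.
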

	
	The pseudo competences preserve the ordering of the true competences and they are strictly greater than $\nicefrac{1}{2}$ for good committees -- proof is given in appendix \ref{app:pseudoprop}. On the other hand, pseudo competence penalizes the most competent experts while evaluating lower-competence experts relatively more accurately as illustrated in fig. \ref{fig:pseudo_good} for $N=10$ experts with competences uniformly spaced over $[0.5,0.9]$. Even though a good committee is sufficient to ensure that committee as a whole is sufficient in the absence of any individual expert $\yay{\pn{i}>\nicefrac{1}{2},\forall i}$, it is not necessary. Therefore, we discuss the conditions that ensure reliable peers for every expert for mixed committees next. 
	\begin{figure}[t!]
		\centering
		\begin{subfigure}[t]{0.24\textwidth}
			\centering
			\includegraphics[width=\textwidth]{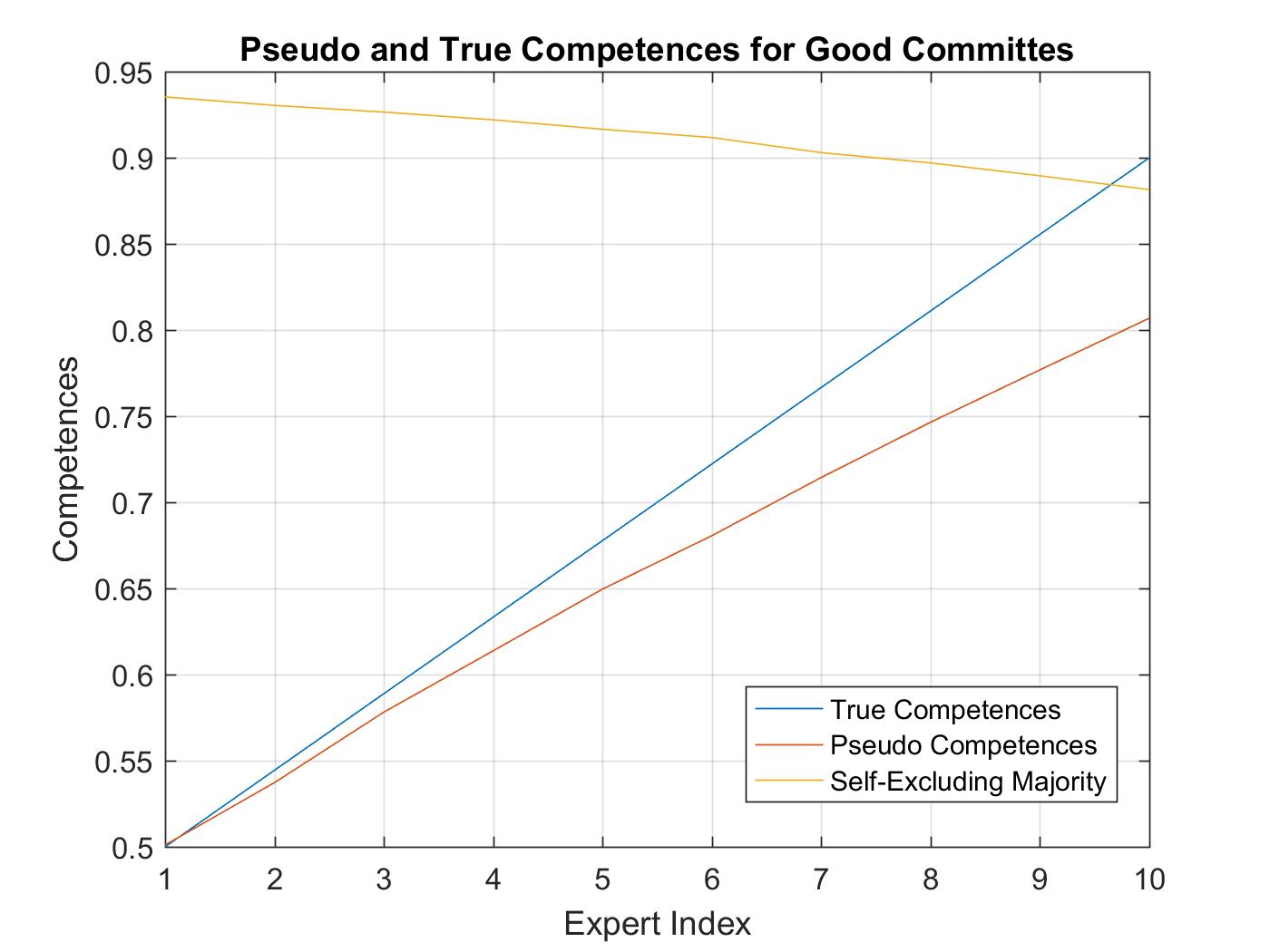}
			\caption{A Good Committee}
			\label{fig:pseudo_good}
		\end{subfigure}%
		~ 
		\begin{subfigure}[t]{0.24\textwidth}
			\centering
			\includegraphics[width=\textwidth]{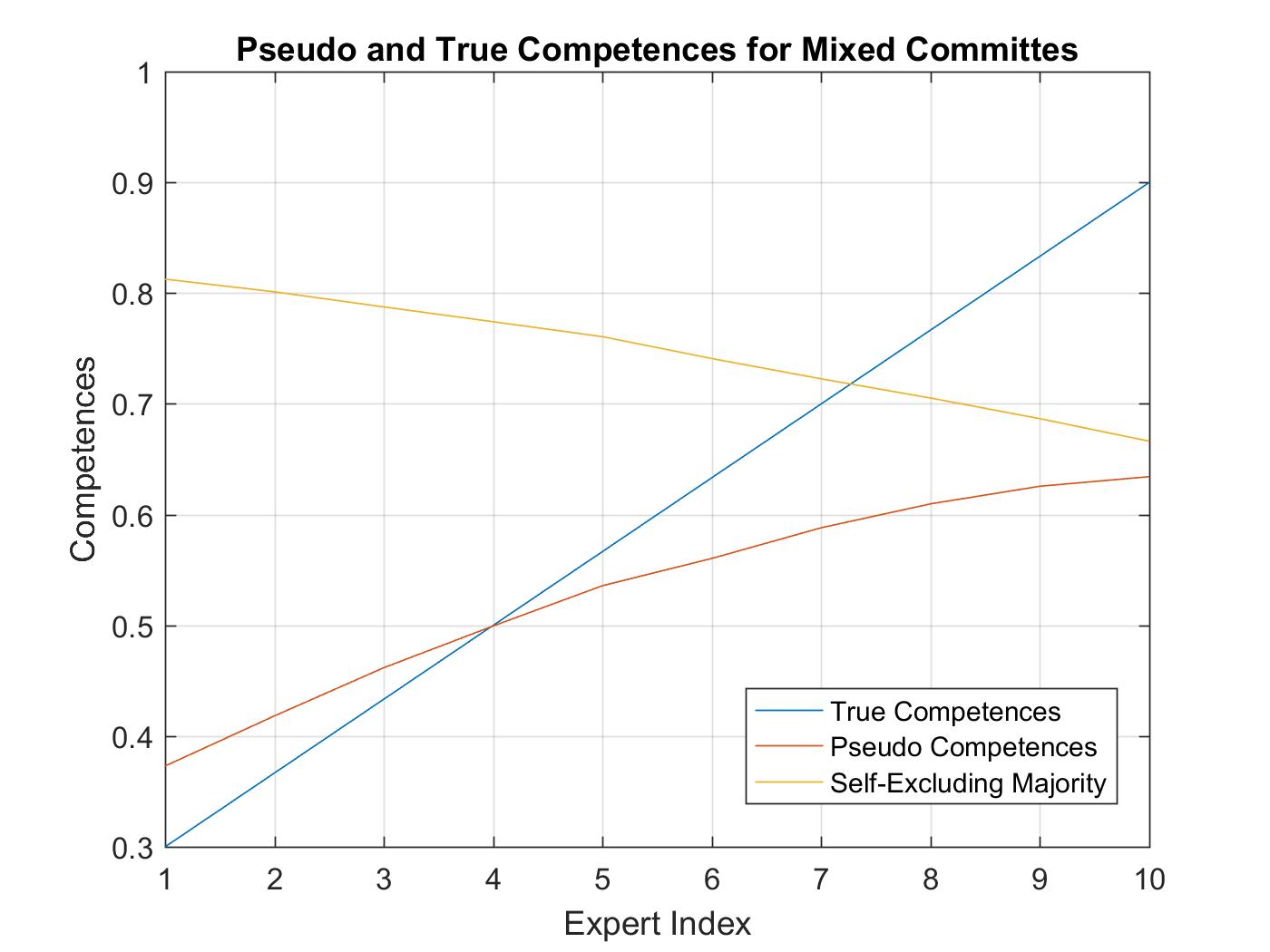}
			\caption{A Mixed Committee}
			\label{fig:pseudo_mixed}
		\end{subfigure}
		\caption{A Comparison of Pseudo Competences $\pp{i}$ to True Competences $p_i$, Self-Excluding Majority $\pn{i}$ as Reference}
		\label{fig:pseudo_overall}
	\end{figure}
	\subsection{Properties of Pseudo Competence for Mixed Committees}
	Pseudo competence relies on the notion that the committee, as a whole, is a sufficiently competent reference point that it is robust to the absence of any individual expert. Certainly, this notion is not valid for all mixed committees: Reliable unsupervised opinion aggregation is unattainable when many experts are unreliable, for instance, $\yay{\abs{i: p_i\leq \nicefrac{1}{2}}\rightarrow N}$. Nonetheless, it is of practical interest to investigate mixed committees that are competent as a whole.
	
	Often, a notion of \textit{consistency} arise in various supervised or unsupervised learning, inference, and decision rules i.a. \cite{cesa2006prediction}. Consistency under majority vote, also known as \textit{Condorcet's Jury Theorem} \cite{berend1998condorcet,paroush1997stay,marquis1785essai}, is defined as follows.
	\begin{definition}[Consistency]
		\label{consistency}
		A committee of experts $\set{X_i}_{i\in\mathbb{N}}$ with competences $\set{p_i}_{i\in\mathbb{N}}$ is consistent under majority voting if:
		\begin{equation*}
			\lim_{N\rightarrow\infty}\prob{\V{X_1,\cdots,X_N}=Y}=1.
		\end{equation*}
	\end{definition} 
	An explicit characterization of consistent mixed committees is non-trivial and addressed in \cite{paroush1997stay,berend1998condorcet,ben2000nonasymptotic}. 
	One should note that the sequence $\prob{\V{X_1,\cdots,X_N}=Y}$ is not necessarily monotonically increasing in $N$ for a consistent committee, even when $p_{i+1}>p_i$, $\forall i$.
	However, for every consistent committee, there exists a monotonically increasing, tight lower bound $a_N\leq v_N$ that we call the \textit{rate} of consistency: 
	\begin{equation}
		\label{rate}
		a_N = \inf_{n\geq N} \prob{\V{X_1,\cdots,X_n}=Y}.
	\end{equation}
	The rate of consistency determines the asymptotic performance of decision aggregation rules that use pseudo competences. 
	Proposition \ref{pseudopropmixed} extends the notion of under-estimation in proposition \ref{pseudoprop} to what we call \textit{pessimistic} estimation for mixed committees:
	
	\begin{prop}
		\label{pseudopropmixed}
		For every consistent committee $\set{p_i}_{i\in\mathbb{N}}$, there exists a committee size $n^{*}$ such that $\forall N>n^{*}$ pseudo competences satisfy:
		\begin{enumerate}
			\item Ordering: $p_i \geq p_j \iff \pp{i}\geq \pp{j}$
			\item Pessimism: $\min\lb \pp{i},1-\pp{i}\rb \geq \min\lb p_i,1-p_i\rb$.
		\end{enumerate}
	\end{prop}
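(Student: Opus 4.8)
The plan is to run everything off a single identity extracted from \eqref{pseudo_total_law}. Writing $q_i=1-p_i$, $\qn{i}=1-\pn{i}$ and expanding $\pp{i}=p_i\pn{i}+q_i\qn{i}$ yields
\begin{equation*}
\pp{i}-\tfrac12=\bigl(2\pn{i}-1\bigr)\bigl(p_i-\tfrac12\bigr),
\end{equation*}
so $\pp{i}$ always sits between $p_i$ and $1-p_i$. The pessimism claim is then immediate and uses no hypothesis: since $\pn{i}\in[0,1]$ we have $|2\pn{i}-1|\le1$, hence $|\pp{i}-\tfrac12|\le|p_i-\tfrac12|$, and the identity $\min\{x,1-x\}=\tfrac12-|x-\tfrac12|$ turns this into $\min\{\pp{i},1-\pp{i}\}\ge\min\{p_i,1-p_i\}$. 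Consequently all of the work, and the reason for both the consistency hypothesis and the threshold $n^{*}$, lies entirely in the ordering claim.

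For ordering, fix a pair $i\neq j$ and let $D=\sum_{l\in[N]\setminus\{i,j\}}X_l$ be the vote differential of the experts common to $\Xnot{i}$ and $\Xnot{j}$, conditioned on $Y=1$ (by the symmetry in \eqref{uniform_prior} this loses nothing). Using the conditional independence \eqref{opinion_gen}, $\pn{i}=\prob{\V{\Xnot{i}}=Y}$ equals $p_j\,\condprob{\sign(D+1)=1}{Y=1}+(1-p_j)\,\condprob{\sign(D-1)=1}{Y=1}$, an affine function of $p_j$ whose coefficients depend only on the law of $D$, and $\pn{j}$ is the \emph{same} affine function evaluated at $p_i$. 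Substituting both into the identity above, the quadratic $p_ip_j$ terms cancel and the remainder is proportional to $p_i-p_j$:
\begin{equation*}
\pp{i}-\pp{j}=(p_i-p_j)\,w_{ij},\qquad w_{ij}=\prob{D\ge2}+\tfrac12\prob{D=1}-\tfrac12\prob{D=-1}-\prob{D\le-2}.
\end{equation*}
Thus, with the factor $p_i-p_j$ carrying the strictness, the ordering claim in both directions is equivalent to $w_{ij}>0$ for every pair.

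It remains to force $w_{ij}>0$ once $N$ is large. A one-line estimate gives $w_{ij}\ge 2\,\prob{\V{[N]\setminus\{i,j\}}=Y}-\tfrac32$, so it suffices that every $(N-2)$-expert sub-committee obtained by deleting two experts is correct with probability above $\tfrac34$. Because $D\ge\sum_{l\in[N]}X_l-2$, we get $\condprob{\V{[N]\setminus\{i,j\}}=Y}{Y=1}\ge\condprob{\sum_{l\in[N]}X_l\ge3}{Y=1}$ uniformly in $i,j$, which reduces the whole question to showing that for a consistent committee the differential $\sum_{l\le n}X_l$ tends to $+\infty$ in probability given $Y$, not merely stays positive. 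This follows from a dichotomy on $\sum_i p_iq_i$: if the series converges, Kolmogorov's two-series theorem pins the fluctuations of $\sum_{l\le n}X_l$ around its mean while the mean must diverge; if it diverges, a Lindeberg central limit theorem forces the standardized mean to diverge and Chebyshev gives $\condprob{\sum_{l\le n}X_l\le M}{Y=1}\to0$ for every fixed $M$. Picking $n^{*}$ past the point where $\condprob{\sum_{l\in[N]}X_l\ge3}{Y=1}>\tfrac34$, which exists by this argument together with the monotone rate $a_N\uparrow1$, yields $w_{ij}>0$ for all $N>n^{*}$ and all $i,j$, finishing the ordering claim.

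The step I expect to be delicate is exactly this last one. The definition of consistency controls only $\condprob{\sum_{l\le n}X_l\le0}{Y}$, whereas I need the differential to be not just positive but bounded away from the boundary with high probability, and I need this simultaneously for all $\binom{N}{2}$ sub-committees missing two experts; the divergence-in-probability lemma is what bridges that gap (and also what guarantees that deleting two fixed experts does not spoil the rate of consistency). Everything else is bookkeeping with the affine representation of $\pn{i}$ and the elementary inequality $w_{ij}\ge 2\,\prob{\V{[N]\setminus\{i,j\}}=Y}-\tfrac32$.
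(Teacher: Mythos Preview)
Your pessimism argument is both correct and strictly cleaner than the paper's. The paper first argues that consistency forces $\pn{i}>\tfrac12$ for all $i$ once $N$ is large, then splits into the three cases $p_i\gtrless\tfrac12$ via \eqref{p1}--\eqref{p2}. You bypass all of that: the identity $\pp{i}-\tfrac12=(2\pn{i}-1)(p_i-\tfrac12)$ together with $|2\pn{i}-1|\le1$ gives $|\pp{i}-\tfrac12|\le|p_i-\tfrac12|$ unconditionally, and the $\min$ reformulation finishes it. This is a genuine simplification.

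For ordering, your factorisation $\pp{i}-\pp{j}=(p_i-p_j)\,w_{ij}$ is exactly the paper's equation \eqref{pseudocompetencediff}; one can check that $\condprob{\V{\Xnot{j}}=Y}{X_i\neq Y}-\condprob{\V{\Xnot{j}}\neq Y}{X_i=Y}$ equals your $w_{ij}$ written in terms of $D$. The paper then simply asserts that consistency eventually makes the $(N-2)$-expert sub-committee satisfy $\prob{D>0}>\prob{D<0}$ and declares ordering proved, without checking that this implies $w_{ij}>0$ (which it does not in the odd-parity case) and without addressing uniformity in $i,j$. Your route---the explicit bound $w_{ij}\ge 2\,\prob{\V{[N]\setminus\{i,j\}}=Y}-\tfrac32$, the uniform comparison $D\ge S_N-2$, and the reduction to $\condprob{S_N\ge3}{Y=1}\to1$---is considerably more honest about what actually needs to be shown.

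There is, however, a real gap in your divergence-in-probability step. The claim ``consistency forces $S_N\to+\infty$ in probability'' is false as stated. Take $p_1=1$, $p_l=1$ for even $l\ge2$, and $p_l=0$ for odd $l\ge3$; then $S_N\in\{1,2\}$ for every $N$, the committee is trivially consistent, yet $\condprob{S_N\ge3}{Y=1}=0$ for all $N$. Your dichotomy breaks in its first branch: when $\sum_i p_iq_i<\infty$ you assert ``the mean must diverge,'' but in this example the mean stays bounded while consistency survives because the fluctuations are identically zero. The missing ingredient is that your argument for ``mean must diverge'' tacitly uses that the a.s.\ limit of the centred sum is a \emph{nondegenerate} random variable, which requires infinitely many $p_l\in(0,1)$. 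With that extra hypothesis your Kolmogorov/Chebyshev dichotomy does go through; without it you need a separate, easy, treatment of the essentially deterministic case (note that there you only need $w_{ij}>0$ for pairs with $p_i\neq p_j$, and for such pairs $D=S_N-X_i-X_j\ge S_N$, not merely $S_N-2$, which rescues the bound).
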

	Conceptually, it indicates that good experts $\yay{p_i>\nicefrac{1}{2}}$ are underestimated and bad ones $\yay{p_j<\nicefrac{1}{2}}$ are overestimated -- proof given in appendix \ref{app:mixedpseudo}. Figure \ref{fig:pseudo_mixed} illustrates proposition \ref{pseudopropmixed} for $N=10$ experts with competences uniformly spaced over $\brac{0.3,0.9}$, where pessimism property is illustrated visually. This property arises as more competent experts in a fixed committee are compared to the relatively weaker subsets of peers that exclude those competent experts. A condition for \textit{finite} mixed committees to satisfy proposition \ref{pseudopropmixed} is given by \eqref{pseudocompetencediff} in appendix \ref{app:pseudoprop}. 
	
	The pseudo competence \eqref{pseudocomp} provides a metric that can be estimated in real-time via \eqref{empiricalpseudo} and that requires no side-information beyond the existence of a probability law characterized by \eqref{uniform_prior}-\eqref{opinion_gen}. It preserves the ordering of the true competences and it neither mistakes a bad expert for a good one nor it does the opposite. Therefore, it provides a reliable unsupervised metric for estimating the underlying probability law directly. Section \ref{SEC:Naive} investigates the minimum probability of error achieving opinion-aggregation strategy, then section \ref{SEC:Pseudo_naive} substitutes the true probability law with the pseudo probability law.   

	\section{Na\"{i}ve Bayes Decision Rule and Its Performance Guarantees}\label{SEC:Naive}
	Consider the family $\mathcal{D}$ of possibly randomized decision rules that aggregate $N$ opinions $\rvec{X}\yay{t}$ to decide on a state $Y(t)$. The minimum probability of error (MPE)-achieving strategy is an instantaneous opinion-aggregation rule:
	\begin{equation*}
		f^{MPE} = \argmin_{f\in\mathcal{D}} \prob{f\yay{\rvec{X}}\neq Y}.
	\end{equation*}
	Note that the competences $\set{p_1,\cdots,p_N}$ being fixed across all tasks $t\in\mathbb{T}$ allows task-dependency of opinions $\rvec{X}\yay{t}$ and states $Y(t)$ to be dropped. It is well known that the maximum a posteriori rule achieves the minimum probability of error \cite{poor2013introduction}. Therefore, the MPE-achieving supervised opinion-aggregation strategy is given by:
	\begin{equation*}
		f^{MPE}\yay{\rvec{X}} = \argmax_{y\in\set{\pm 1}} \condprob{y}{\rvec{X}},
	\end{equation*}
	with the corresponding likelihood-ratio test:
	\begin{equation}
		\label{naivebayes}
		f^{MPE}(\mathbf{X}) = \sign\left(\sum_{i=1}^N X_i \log\frac{p_i}{q_i}\right) \equiv f^{NB}(\mathbf{X}).
	\end{equation} 
	The decision rule in \eqref{naivebayes} is often referred to as \textit{na\"{i}ve Bayes decision rule} (NB) \cite{berend2015finite}. The existence of the probability law defined by \eqref{uniform_prior}-\eqref{opinion_gen} ensures that NB decision rule is instantaneous and that it has no \textit{bias term} in the sign function. 
	
	As a direct consequence of being the maximum a posteriori rule, the probability of error of the na\"{i}ve Bayes decision rule can be written explicitly as:  
	\begin{align*}
		\prob{f^{NB}(\Xvec{})\neq Y} 
		&= \frac{1}{2}\sum_{\dvec{x}\in\set{\pm 1}^{N}} \min_{y\in\set{\pm 1}}\condprob{\Xvec{}=\dvec{x}}{Y=y}. 
	\end{align*}
	Even though $\min_{y\in\set{\pm 1}}\condprob{\Xvec{}=\dvec{x}}{Y=y}$ is easy to compute $\forall \dvec{x}$ when $\set{p_1,\cdots,p_N}$ is known, the sum is still intractable for large $N$, which motivates research for lower and upper bounds. It has been shown that a set of lower and upper bounds can be found in the form \cite[Theorem 1]{berend2015finite}:
	\begin{equation*}
		-\log\prob{f^{NB}(\Xvec{})\neq Y} \asymp \Phi,
	\end{equation*} 
	where $\asymp$ denotes upper and lower bounds within a constant factor. As a function of the true competences $\set{p_1,\cdots,p_N}$, $\Phi$ is called the \textit{committee potential}, \cite{berend2015finite,berend2013sharp,kearns1998large} and it is given by:
	\begin{equation}
		\label{potential}
		\Phi\yay{\dvec{p}} = \sum_{i=1}^{N}\left(p_i-\frac{1}{2}\right)\log\frac{p_i}{q_i}.
	\end{equation}
	The upper-bound given in \cite[Theorem 1(i)]{berend2015finite} makes a clever use of the Chernoff bounding technique, see, for instance, \cite[Chapter 2.2.1]{raginsky2013concentration}, and Kearns-Saul inequality \cite[Lemma 1]{kearns1998large}. A detailed discussion on Kearns-Saul and Berend-Kontorovich concentration inequalities for mixtures of independent, bounded random variables is given in \cite[Chapter 2.2.4]{raginsky2013concentration}. 
	\begin{figure}[t!]
		\centering
		\includegraphics[width=.45\textwidth]{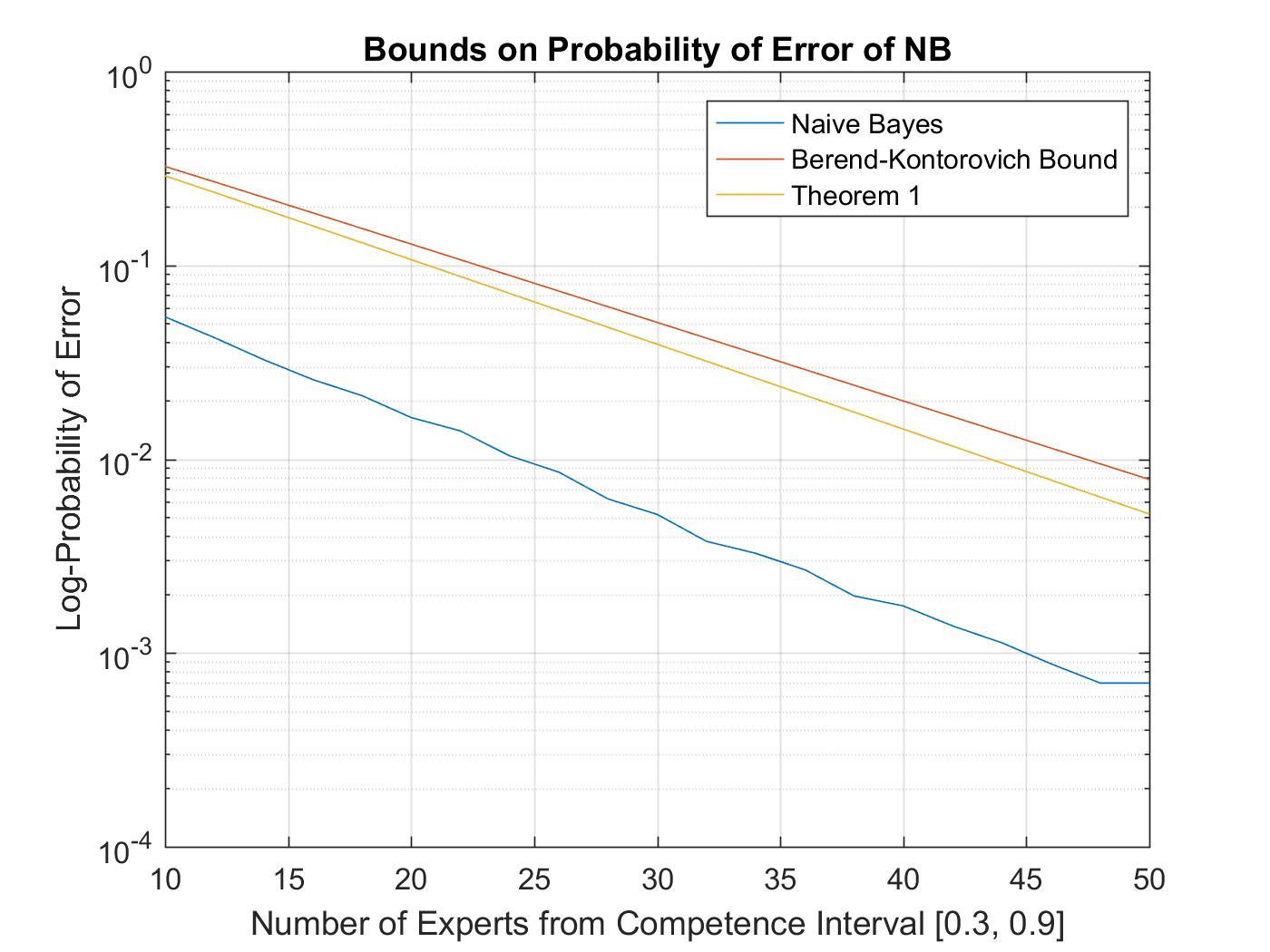}
		\caption{A Comparison of Upper-Bounds for Na\"{i}ve Bayes Probability of Error}
		\label{fig:bound_comparison}
	\end{figure}
	
	Interestingly, in the case of na\"{i}ve Bayes decision rule, the subsequent use of Kearns-Saul inequality appears unnecessary. As shown in appendix \ref{app:thm_improved_NB}, a direct consequence of the Chernoff bounding technique is as follows:
	\begin{thm}
		\label{improved_upper}
		The following tight upper-bound holds:
		\begin{equation*}
			\prob{f^{NB}\yay{\rvec{X}} \neq Y} \leq \prod_{i=1}^{N}\sqrt{4 p_i q_i}.
		\end{equation*}
	\end{thm}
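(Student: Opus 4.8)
The plan is to bound the error of $f^{NB}$ by the Chernoff (Bhattacharyya) method, the one observation being that for the na\"{i}ve-Bayes weights $w_i=\log(p_i/q_i)$ the error factorizes in a way that makes the Chernoff exponent $\lambda=\tfrac12$ optimal for every factor simultaneously --- which is precisely why the subsequent Kearns--Saul step of \cite[Theorem 1(i)]{berend2015finite} becomes superfluous.

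\emph{First, reduce to a one-sided tail.} By the uniform prior \eqref{uniform_prior} and the sign symmetry $(\mathbf{x},y)\mapsto(-\mathbf{x},-y)$ of the model, $\prob{f^{NB}(\mathbf{X})\neq Y}=\condprob{f^{NB}(\mathbf{X})=-1}{Y=1}$. Since $f^{NB}(\mathbf{X})=\sign\!\left(\sum_i X_i w_i\right)$, the event $\{f^{NB}(\mathbf{X})=-1\}$ is contained in $\{S\le 0\}$ where $S:=\sum_{i=1}^N X_i w_i$, regardless of how ties are broken, so it suffices to bound $\condprob{S\le 0}{Y=1}$.

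\emph{Next, apply Chernoff and factor.} Conditioned on $Y=1$ the $X_i$ are independent with $\condprob{X_i=1}{Y=1}=p_i$, so for every $\lambda>0$,
\[
\condprob{S\le 0}{Y=1}\le\condexpt{e^{-\lambda S}}{Y=1}=\prod_{i=1}^{N}\left(p_i e^{-\lambda w_i}+q_i e^{\lambda w_i}\right)=\prod_{i=1}^{N}\left(p_i^{1-\lambda}q_i^{\lambda}+q_i^{1-\lambda}p_i^{\lambda}\right),
\]
using $e^{\pm\lambda w_i}=(p_i/q_i)^{\pm\lambda}$. Each factor $\phi_i(\lambda)=p_i e^{-\lambda w_i}+q_i e^{\lambda w_i}$ is convex in $\lambda$, and $\phi_i'(\lambda)=w_i\!\left(q_i e^{\lambda w_i}-p_i e^{-\lambda w_i}\right)$ vanishes exactly at $\lambda=\tfrac12$ (with $\phi_i\equiv 1$ when $p_i=\tfrac12$). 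Hence every factor is minimized at $\lambda=\tfrac12$, where $\phi_i(\tfrac12)=2\sqrt{p_iq_i}=\sqrt{4p_iq_i}$; taking $\lambda=\tfrac12$ yields $\prob{f^{NB}(\mathbf{X})\neq Y}\le\prod_{i=1}^{N}\sqrt{4p_iq_i}$, which is the claim.

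\emph{A shortcut, and the point of difficulty.} The same bound also follows directly from the exact Bayes-error identity displayed just before the theorem: for every sign pattern $\mathbf{x}$ one has $\condprob{\mathbf{X}=\mathbf{x}}{Y=1}\,\condprob{\mathbf{X}=\mathbf{x}}{Y=-1}=\prod_i p_iq_i$, so $\min_{y}\condprob{\mathbf{X}=\mathbf{x}}{Y=y}\le\prod_i\sqrt{p_iq_i}$, and summing over the $2^N$ patterns against the $\tfrac12$ prior gives $2^{N-1}\prod_i\sqrt{p_iq_i}$, i.e.\ even $\tfrac12\prod_i\sqrt{4p_iq_i}$. I do not anticipate a genuine obstacle: the only steps demanding care are the verification that a single exponent $\lambda=\tfrac12$ is optimal across all coordinates --- the crux of the improvement over \cite{berend2015finite} --- and the minor bookkeeping for tie-breaking and the $\tfrac12$ prior constant in the reduction step.
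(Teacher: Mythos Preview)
Your main argument is correct and is essentially the paper's proof: both apply the Chernoff bound and show that a single exponent is optimal for every factor simultaneously, the only difference being parameterization---you work with $S=\sum_i X_i w_i$ and find $\lambda=\tfrac12$ optimal, while the paper centers via $\eta_i=\ind{X_i=Y}$ and finds $t=1$ optimal, and these are the same substitution after unwinding the centering.

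Your ``shortcut'' paragraph, however, is a genuinely different and more elementary route that the paper does not take: using $\min(a,b)\le\sqrt{ab}$ directly on the exact Bayes-error sum, together with the identity $\condprob{\mathbf{X}=\mathbf{x}}{Y=1}\,\condprob{\mathbf{X}=\mathbf{x}}{Y=-1}=\prod_i p_iq_i$, bypasses the MGF computation entirely and even gains an extra factor of $\tfrac12$, yielding $\tfrac12\prod_i\sqrt{4p_iq_i}$. This is the Bhattacharyya bound applied pointwise rather than through Chernoff, and it is strictly sharper than what either you or the paper obtain via the moment-generating-function route; the paper does not note this.
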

	Theorem \ref{improved_upper} is the sharpest bound attainable by the use of Chernoff bounding technique and it is tight, as evidenced by, $\exists i: p_i=1$, in addition to the regimes discussed in \cite{berend2015finite}. Figure \ref{fig:bound_comparison} provides a comparison of bounds between that given in theorem \ref{improved_upper} and that in \cite[Theorem 1]{berend2015finite} for $N\in\brac{10,50}$ experts with equally-spaced competences chosen from the interval $\brac{0.3, 0.9}$. 
	
	Observe that $w(p)=\log\frac{p}{1-p}\in\yay{-\infty,\infty}$ is an unbounded function that monotonically increases over $p\in[0,1]$. On one hand, when competences are known reliably, $w(p)$ ensures that highly competent experts are almost exclusively relied upon. On the other hand, in empirical setups, where competences are to be estimated, direct use of $w(\hat{p})$ would jeopardize the robustness of opinion-aggregation rule: An expert with arbitrary competence $p_i<1$ could be assigned, with non-negligible probability, a competence estimate $\hat{p}_i\approx 1$ and thus, an unbounded weight when the number of labeled tasks is limited. In such cases, a \textit{linearized} na\"{i}ve Bayes (LNB) decision rule could be considered:
	\begin{equation}
		\label{linearnaivebayes}
		f^{LNB}(\mathbf{X}) = \sign\left(\sum_{i=1}^N X_i \yay{p_i-\nicefrac{1}{2}}\right).
	\end{equation} 
	Alternatively, one might focus on a practical subset of committees, one that we call \textit{absolutely balanced} committees, that excludes almost-all-knowing $\yay{p\approx 1}$ and almost-ever-lying $\yay{p\approx 0}$ experts.
	\begin{definition}[Absolute Balance]
		\label{absolutebalance}
		A committee $\set{p_i}_{i\in\mathbb{N}}$ is absolutely-balanced if $\exists \gamma \in \left(0, \nicefrac{1}{2}\right)$ such that $p_i\in(\gamma,1-\gamma)$, $\forall i$. The factor $\gamma$ is called the balancing parameter of the committee.
	\end{definition}
	For an absolutely-balanced committee, not only the true competence estimates $\set{\hat{p}_1(T),\cdots, \hat{p}_N(T)}$, defined in \eqref{emp_est}, yield robust empirical implementation of the NB rule through direct substitution into \eqref{naivebayes} but they also enable the empirical use of LNB rule by substitution into \eqref{linearnaivebayes} that exhibit similar performance with NB. 
	
	Section \ref{SEC:PseudoComp} shows that a set of opinions can be used to estimate true competences $\set{p_1,\cdots,p_N}$ in the form of pseudo competences $\set{\pp{1},\cdots,\pp{N}}$. Section \ref{SEC:Pseudo_naive} shows that for an absolutely balanced committee, one could indeed achieve an unsupervised opinion-aggregation performance that scales with the committee potential, therefore, with the performance of MPE-achieving rule, while suffering from error due to bias between pseudo and true competences that \textit{diminishes} in the number of experts consulted.
	\section{The Pseudo Na\"{i}ve Bayes Decision Rule}
	\label{SEC:Pseudo_naive}
	In the unsupervised problem setup, true competence estimates $\set{\hat{p}_1,\cdots,\hat{p}_N}$ are not achievable, where, over $T$ tasks, the pseudo competence estimates $\set{\pest{1}{T},\cdots,\pest{N}{T}}$ can be inferred. An opinion-aggregation rule that use pseudo competence estimates does not only suffer performance degradation due to empirical estimation error but due to the bias $\abs{p_i-\pp{i}}$ as well. The goal of this section is to first propose fundamental limits for a decision rule that has access to pseudo competences $\set{\pp{1},\cdots,\pp{N}}$ \textit{directly}.
	
	Let us call the following opinion aggregation rule the \textit{pseudo na\"{i}ve Bayes} (PNB) rule: 
	\begin{equation}
		\label{pseudonaivebayes}
		f^{PNB}(\mathbf{X}) = \sign\left(\sum_{i=1}^N X_i \log\frac{\pp{i}}{\pq{i}}\right).
	\end{equation} 
	The PNB rule corresponds to \textit{assuming} that the underlying probability law is characterized by the pseudo competences $\set{\pp{1},\cdots,\pp{N}}$, where it is actually characterized by the true competences $\set{p_1,\cdots,p_N}$. PNB rule, similar to NB rule, is an instantaneous decision rule. Different from NB rule, PNB rule is empirically achievable without supervision.  
	
	Formally, we show that the performance of PNB rule scales with the underlying true committee potential $\Phi\yay{\dvec{p}}$, as defined in \eqref{potential}, and the performance degradation due to the use of pseudo competences is quantified as follows: 
	\begin{equation*}
		-\log\prob{f^{PNB}(\Xvec{})\neq Y} \asymp (1-\delta(a_N,\gamma))\Phi.
	\end{equation*}
	Here, $\delta(\cdot)$ represents the performance degradation due to lack of supervision and it is a bounded function of the rate of consistency $a_N$ from \eqref{rate} and the balancing parameter $\gamma$ from definition \ref{absolutebalance}. The variable $\delta(\cdot)$ diminishes both in $a_N$ and $\gamma$ due to the difference between pseudo competences and true competences diminishing in $a_N$ and $\gamma$ limiting the maximum difference between pseudo competences and true competences. 
	
	Let us note that the performance of the PNB decision rule exhibits a similar scaling to that of NB, as long as exclusion of any expert leaves committee reliable ($\pn{i}>\nicefrac{1}{2}$, $\forall i$):
	\begin{equation}
		\label{performancePNB}
		-\log\prob{f^{PNB}(\Xvec{})\neq Y} \asymp \tilde{\Phi}.
	\end{equation}
	Here, we call the term $\tilde{\Phi}$ pseudo committee potential:
	\begin{equation*}
		\tilde{\Phi} = \sum_{i=1}^{N}\left(p_i-\frac{1}{2}\right)\log\frac{\pp{i}}{1-\pp{i}}.
	\end{equation*}
	The relation in \eqref{performancePNB} follows algebraically from the proof of \cite[Theorem 1]{berend2015finite} by the use of property \ref{pseudopropmixed},
	and it is outlined in appendix \ref{app:PNBproof}. 
	
	The pseudo committee potential $\tilde{\Phi}$ and true committee potential $\Phi$ converge at a rate that is determined by the rate at which the underlying committee becomes consistent in the majority vote. Theorem \ref{limitperformance} quantifies the rate at which the performance of the PNB decision rule scales with that of NB.  
	\begin{thm}
		\label{limitperformance}
		Every absolutely balanced, consistent committee with rate $a_N$ and balancing parameter $\gamma$ satisfies:
		\begin{equation*}
			\frac{\tilde{\Phi}}{\Phi}\geq 1- C(\nicefrac{1}{2}-\gamma)\rho_N \longrightarrow_N 1.
		\end{equation*}
		Here, $\rho_N=\nicefrac{\yay{1-a_N}}{\yay{a_N-\nicefrac{1}{2}}}$ and $C(x)$ is a positive function supported on $x\in[-\nicefrac{1}{2},\nicefrac{1}{2}]$.
	\end{thm}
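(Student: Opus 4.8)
The plan is to reduce the statement to a term-by-term comparison of $\tilde\Phi$ and $\Phi$ and then to control each term through the rate of consistency \eqref{rate}. Introduce the odd, strictly increasing function $g(u)=\log\frac{\nicefrac{1}{2}+u}{\nicefrac{1}{2}-u}$ on $|u|<\nicefrac{1}{2}$, so that $\Phi=\sum_{i=1}^{N}(p_i-\nicefrac{1}{2})\,g(p_i-\nicefrac{1}{2})$ and $\tilde\Phi=\sum_{i=1}^{N}(p_i-\nicefrac{1}{2})\,g(\pp{i}-\nicefrac{1}{2})$. Since $g$ is odd, increasing and vanishes at $0$, every summand of $\Phi$ equals $|p_i-\nicefrac{1}{2}|\,g(|p_i-\nicefrac{1}{2}|)\ge 0$. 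Combining \eqref{pseudo_total_law} with $\qn{i}=1-\pn{i}$ and $q_i=1-p_i$ gives the identity
\begin{equation*}
	\pp{i}-\nicefrac{1}{2}=\bigl(1-2\qn{i}\bigr)\bigl(p_i-\nicefrac{1}{2}\bigr),
\end{equation*}
so once $N$ exceeds the threshold $n^{*}$ of Proposition \ref{pseudopropmixed} we have $\pn{i}>\nicefrac{1}{2}$, i.e. $0<1-2\qn{i}\le 1$, whence $g(\pp{i}-\nicefrac{1}{2})$ has the same sign as $g(p_i-\nicefrac{1}{2})$ and $\tilde\Phi=\sum_i|p_i-\nicefrac{1}{2}|\,g\bigl((1-2\qn{i})|p_i-\nicefrac{1}{2}|\bigr)$ with all summands non-negative. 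It is therefore enough to prove, uniformly in $i$, the scalar inequality
\begin{equation*}
	\frac{g\bigl((1-2\qn{i})\,u\bigr)}{g(u)}\ \ge\ 1-C(\nicefrac{1}{2}-\gamma)\,\rho_N ,\qquad u=|p_i-\nicefrac{1}{2}|\in\bigl(0,\,\nicefrac{1}{2}-\gamma\bigr] ,
\end{equation*}
since multiplying it by the non-negative weights $|p_i-\nicefrac{1}{2}|$ and summing yields $\tilde\Phi\ge\bigl(1-C(\nicefrac{1}{2}-\gamma)\rho_N\bigr)\Phi$, the degenerate case $\Phi=0$ being excluded for large $N$ by consistency.

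For the scalar inequality I would use that $g'(v)=(\nicefrac{1}{4}-v^2)^{-1}$ is increasing, so on $[0,\nicefrac{1}{2}-\gamma]$ it lies between $g'(0)=4$ and $g'(\nicefrac{1}{2}-\gamma)=(\gamma(1-\gamma))^{-1}$. Writing $s=1-2\qn{i}\in(0,1]$, the mean value theorem gives $g(u)-g(su)\le u(1-s)\,g'(u)$ and $g(su)\ge 4su$, hence
\begin{equation*}
	\frac{g(u)-g(su)}{g(su)}\ \le\ \frac{1-s}{s}\cdot\frac{1}{4(\nicefrac{1}{4}-u^2)}\ \le\ \frac{1-s}{s}\cdot\frac{1}{4\,\gamma(1-\gamma)} ,
\end{equation*}
where absolute balance (Definition \ref{absolutebalance}) enters precisely through $\nicefrac{1}{4}-u^2\ge\gamma(1-\gamma)$. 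This pins down $C(\nicefrac{1}{2}-\gamma)=\bigl(4\gamma(1-\gamma)\bigr)^{-1}=\bigl(1-4(\nicefrac{1}{2}-\gamma)^2\bigr)^{-1}$, positive and finite for every $\gamma\in(0,\nicefrac{1}{2})$, consistent with the claimed support of $C$. Finally $\frac{1-s}{s}=\frac{\qn{i}}{\pn{i}-\nicefrac{1}{2}}$, and since $x\mapsto\frac{1-x}{x-\nicefrac{1}{2}}$ is decreasing on $(\nicefrac{1}{2},1]$, a uniform lower bound $\pn{i}\ge a_N$ gives $\frac{1-s}{s}\le\rho_N$; together with $\frac{1}{1+y}\ge 1-y$ this completes the scalar estimate, and summation then gives the theorem, with $\rho_N\to 0$ (because $a_N\to 1$ by monotonicity of the rate) delivering $\tilde\Phi/\Phi\to 1$.

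The remaining ingredient, which I expect to be the main obstacle, is the uniform bound $\pn{i}\ge a_N$ — equivalently, that $\qn{i}$ is of order $1-a_N$ simultaneously for all $i$. The peers of expert $i$ constitute an $(N-1)$-element sub-committee, not a prefix of the sequence enumerated in \eqref{rate}, so passing from Definition \ref{consistency} and the definition of the rate to control of each individual $\pn{i}$ needs care; this is exactly the estimate already carried out in the proof of Proposition \ref{pseudopropmixed} in appendix \ref{app:mixedpseudo}, which I would invoke, absorbing the immaterial index shift between $a_N$ and $a_{N-1}$ into the constant.
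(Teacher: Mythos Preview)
Your proof is correct and follows essentially the same route as the paper: both reduce $\tilde\Phi/\Phi$ to a termwise ratio of log-odds weights via the identity $\pp{i}-\nicefrac{1}{2}=(2\pn{i}-1)(p_i-\nicefrac{1}{2})$, then bound that scalar ratio using absolute balance (to produce the constant $C$) and consistency (to produce $\rho_N$). The only cosmetic differences are that the paper derives the scalar inequality by a first-order Taylor expansion of $\log\frac{1+a+x}{1-a+x}$ about $x=0$ rather than your mean-value argument---giving the slightly sharper $C(x)=\dfrac{4x}{(1-4x^2)\log\frac{1+2x}{1-2x}}$ in place of your $(1-4x^2)^{-1}$---and that the paper simply asserts $\epsn{i}\ge a_N-\nicefrac{1}{2}$ without the caveat you (rightly) flag.
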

	Theorem \ref{limitperformance} indicates that PNB rule is not only asymptotically optimal, but it approaches to the performance of the optimal (supervised) decision rule that becomes consistent at a rate that is faster than that of majority vote, $a_N$. A dual of this result can be formulated as follows:  
	\begin{corollary}
		\label{fixedsizepotential}
		A sufficient condition for any absolutely balanced committee of size $N$ to ensure that $1-\delta \leq \frac{\tilde{\Phi}}{\Phi}$ is as follows:
		\begin{equation*}
			\frac{1-p_{\setminus i}}{p_{\setminus i}-\nicefrac{1}{2}} \leq \frac{\delta}{C(\nicefrac{1}{2}-\gamma)}, \forall i\in[N].
		\end{equation*} 
	\end{corollary}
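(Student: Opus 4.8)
The plan is to revisit the proof of Theorem~\ref{limitperformance} and to halt one step before the rate of consistency $a_N$ is introduced. The starting point is the shrinkage identity that follows from \eqref{pseudo_total_law} and $\qn{i}=1-\pn{i}$: writing $p_i=\nicefrac{1}{2}+a_i$ and $\pn{i}=\nicefrac{1}{2}+b_i$ one gets $\pp{i}-\nicefrac{1}{2}=2a_ib_i=2\yay{\pn{i}-\nicefrac{1}{2}}\yay{p_i-\nicefrac{1}{2}}$, so $\pp{i}$ is $p_i$ contracted toward $\nicefrac{1}{2}$ by the factor $2\yay{\pn{i}-\nicefrac{1}{2}}\in(0,1)$ whenever the peers are reliable. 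In particular $\pp{i}$ lies on the same side of $\nicefrac{1}{2}$ as $p_i$ — consistent with Proposition~\ref{pseudopropmixed} — so every summand of $\Phi$ and of $\tilde{\Phi}$ is nonnegative. Since on an absolutely balanced committee each $p_i$ is confined to $(\gamma,1-\gamma)$, bounded away from $0$ and $1$, one obtains the per-expert comparison
\begin{equation*}
  \frac{\yay{p_i-\nicefrac{1}{2}}\log\frac{\pp{i}}{\pq{i}}}{\yay{p_i-\nicefrac{1}{2}}\log\frac{p_i}{q_i}}\;\geq\;1-C\yay{\nicefrac{1}{2}-\gamma}\,\frac{1-\pn{i}}{\pn{i}-\nicefrac{1}{2}},
\end{equation*}
with $C(\cdot)$ the positive function of Theorem~\ref{limitperformance} (concretely one may take $C\yay{\nicefrac{1}{2}-\gamma}=\sup\set{\nicefrac{2|a|}{(1-4a^2)\operatorname{arctanh}(2|a|)} : |a|\leq\nicefrac{1}{2}-\gamma}$, which is finite because $a$ is bounded away from $\pm\nicefrac{1}{2}$; the inequality itself comes from $\log\frac{1/2+x}{1/2-x}=2\operatorname{arctanh}(2x)$ and a mean-value bound on $\operatorname{arctanh}$). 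This is exactly the intermediate estimate from which Theorem~\ref{limitperformance} deduces $\tilde{\Phi}/\Phi\geq 1-C\yay{\nicefrac{1}{2}-\gamma}\rho_N$ after using $\pn{i}\geq a_N$, hence $\frac{1-\pn{i}}{\pn{i}-\nicefrac{1}{2}}\leq\rho_N$; here we simply do not take that last, size-asymptotic step.

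Second, I would assemble the per-expert bounds at fixed $N$. Put $\varepsilon_i=C\yay{\nicefrac{1}{2}-\gamma}\,\frac{1-\pn{i}}{\pn{i}-\nicefrac{1}{2}}$. Using nonnegativity of each term of $\Phi$,
\begin{equation*}
  \tilde{\Phi}\;\geq\;\sum_{i=1}^{N}\yay{1-\varepsilon_i}\yay{p_i-\nicefrac{1}{2}}\log\frac{p_i}{q_i}\;\geq\;\yay{1-\max_{i\in[N]}\varepsilon_i}\,\Phi,
\end{equation*}
so that $\tilde{\Phi}/\Phi\geq 1-\max_{i\in[N]}\varepsilon_i$, the ratio being well defined since $\Phi>0$ for any non-degenerate committee. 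The hypothesis $\frac{1-\pn{i}}{\pn{i}-\nicefrac{1}{2}}\leq \delta/C\yay{\nicefrac{1}{2}-\gamma}$ for every $i$ reads exactly as $\varepsilon_i\leq\delta$ for every $i$, whence $\max_{i\in[N]}\varepsilon_i\leq\delta$ and $\tilde{\Phi}/\Phi\geq 1-\delta$, which is the assertion.

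The one point requiring care is well-definedness of the per-expert comparison, which needs $\pn{i}>\nicefrac{1}{2}$ so that the shrinkage factor $2\yay{\pn{i}-\nicefrac{1}{2}}$ lies in $(0,1)$ and $\pp{i}$ is not pushed across $\nicefrac{1}{2}$. This is not an extra assumption: the stated inequality $\frac{1-\pn{i}}{\pn{i}-\nicefrac{1}{2}}\leq \delta/C\yay{\nicefrac{1}{2}-\gamma}$ has a finite right-hand side and is infeasible unless $\pn{i}>\nicefrac{1}{2}$, so it already encodes the reliable-peers condition expert by expert. Beyond that, the corollary is nothing more than Theorem~\ref{limitperformance} read at a fixed committee size rather than in the limit $N\to\infty$, so I do not anticipate a genuine obstacle; the only real change is that the per-expert slacks $\varepsilon_i$ are now controlled individually instead of uniformly through $a_N$.
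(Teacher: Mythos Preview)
Your proposal is correct and follows essentially the same route as the paper's proof: both revisit the per-expert inequality established inside the proof of Theorem~\ref{limitperformance}, stop short of invoking $a_N$, and observe that the hypothesis bounds every per-expert slack by $\delta$, whence $\tilde{\Phi}/\Phi\geq 1-\delta$. Your write-up is simply more explicit than the paper's two-line argument---you spell out the weighted-average assembly step and the implicit constraint $\pn{i}>\nicefrac{1}{2}$, which the paper leaves tacit.
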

	Decision rules that employ some relevant statistics in a functional form that is known to perform well are often called ``plug-in'' decision rules. Plug-in rules are often difficult to analyze and are often suboptimal \cite[Chapter 1]{haykin2005adaptive}. Interestingly, despite being a plug-in rule, PNB rule achieves minimum probability of error asymptotically, thanks to the construction of the pseudo competences.
	
	PNB rule, as defined in \eqref{pseudonaivebayes}, is instantaneous because it uses pseudo competences directly, which, in practice, should be estimated from the opinions empirically. The manner in which pseudo competence estimates are updated gives rise to the operational meaning of the corresponding decision aggregation rule, as addressed in Section \ref{SEC:EmpiricalPseudoNaiveBayes}.   
	%
	%
	%
	%
	%
	%
	%
	%
	%
	\section{Empirical Rules that Use Pseudo-Competences}
	\label{SEC:EmpiricalPseudoNaiveBayes}
	
	A block-processing, empirical, and unsupervised opinion-aggregation rule can be constructed from pseudo competence estimates via:
	\begin{equation}
		\label{empiricalpseudonaivebayes}
		f^{B}(\mathbf{X}(t)) = \sign\left(\sum_{i=1}^N \dvec{X}_i(t) w(\pest{i}{T})\right).
	\end{equation} 
	Note that \eqref{empiricalpseudonaivebayes} first estimates pseudo competences over $T$ tasks \textit{then} aggregates the available opinions. With this understanding, one might drop the time-dependence for a block-processing rule. The weight function $w(\cdot)$ can be picked based on the number of tasks. 
	The adaptive extension of \eqref{empiricalpseudonaivebayes} is as follows: 
	\begin{equation}
		\label{adaptivepseudonaivebayes}
		f^{A}_{t}(\mathbf{X}(t)) = \sign\left(\sum_{i=1}^N X_i(t) w(\pest{i}{t})\right).
	\end{equation} 
	Asymptotically, both of the empirical rules should achieve optimality at a rate close to that of NB rule, as Theorem \ref{limitperformance} indicates. The objective is to quantify the impact of empirical estimation error.
	%
	%
	
	\subsection{Block Opinion Aggregation}
	Pseudo competences can be estimated dynamically in real-time or over a block of opinions. When a block of opinions is processed, the performance of the corresponding empirical PNB decision rule $f^{B}\yay{\cdot}$ is determined by:
	\begin{enumerate} 
		\item \textit{Pseudo Competence Bias:} $\norm{\ppvec{}-\dvec{p}}_1$,
		\item \textit{Empirical Estimation Error:} $\norm{\pestvec{T}-\ppvec{}}_1$.
	\end{enumerate}
	Note that pseudo competence bias is the cost of operating in an unsupervised setup and it is a hidden-yet-fixed function of $\dvec{p}$ for every committee. Empirical estimation error, on the other hand, introduces a non-linear distortion that propagates through the weights $w\yay{\pestvec{T}}$ of the decision rule.  
	
	When each expert can only be consulted for a small number $T$ of tasks, a rather coarse, high-variance, estimate, $\pp{i}(T)$, is achievable, and hence the corresponding weights 
	$w(\pp{})= \log\frac{\pp{}}{1-\pp{}}$ might be arbitrarily large. 
	In order to rectify this non-robust behavior, a \textit{linearized} version of the weights $\ell(\pp{})= \pp{}-\nicefrac{1}{2}$, which follows from the first order Taylor series expansion of $w\yay{\pp{}}$ around $\pp{}=\nicefrac{1}{2}$, can be used:
	\begin{equation}
		\label{lightload}
		f^{L}(\mathbf{X}) = \sign\left(\sum_{i=1}^N X_i (\pest{i}{T}-\nicefrac{1}{2})\right).
	\end{equation} 
	It is clear that \eqref{lightload} should not suffer from estimation error as much as  empirical PNB rule with $w(\pp{})= \log\nicefrac{\pp{}}{\pq{}}$ \eqref{empiricalpseudonaivebayes}. The main challenge for such a rule is to achieve consistency, as addressed next.
	
	\begin{prop}
		\label{lowsamplingconsistency}
		If a committee satisfies:
		\begin{enumerate}
			\item (Na\"{i}ve Bayes) $\lim_{N\rightarrow\infty} \frac{1}{\sqrt{N}}\sum_{i=1}^{N}(p_{i}-\nicefrac{1}{2})^2 = \infty$,
			\item (Majority) $\lim_{N\rightarrow\infty} \frac{1}{\sqrt{N}}\sum_{i=1}^{N}(p_{i}-\nicefrac{1}{2}) \geq \sqrt{\frac{\log 2}{8}}$,
		\end{enumerate}
		then $f^L(\cdot)$ is consistent: $\lim\limits_{N\rightarrow \infty}\prob{f^{L}(\Xvec{})\neq Y}=0$. 
	\end{prop}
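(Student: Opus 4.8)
The plan is to reduce the statement to two Hoeffding-type concentration arguments: one for the linearized decision statistic itself, conditioned on the opinions used to build the pseudo-competence estimates, and a second, auxiliary one that controls the estimation error. By the sign-symmetry of the law \eqref{uniform_prior}--\eqref{opinion_gen} — under which $\pest{i}{T}$, being a function of the events $\set{X_i(t)=\V{\Xnot{i}(t)}}$, is distributionally invariant — the error of any rule $\sign\yay{\sum_i X_i w_i}$ satisfies $\prob{f^L(\Xvec{})\neq Y}\leq\condprob{\sum_{i=1}^N X_i w_i\leq 0}{Y=1}$, ties (the event $\sum_i X_i w_i=0$, resolved as in \eqref{def:MV}) only strengthening the inequality. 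I would then condition on $\calD_T\triangleq\set{\Xvec{}(t):t\leq T}$, the data generating the estimates; the weights $w_i=\pest{i}{T}-\tfrac12\in\brac{-\tfrac12,\tfrac12}$ become constants, and by the conditional independence \eqref{opinion_gen} the statistic $S=\sum_i X_i w_i$ is a sum of conditionally independent terms supported on intervals of length $2|w_i|\leq 1$. Hoeffding's inequality (see e.g.\ \cite[Chapter 2.2.1]{raginsky2013concentration}) then yields, on the event that the conditional drift $\mu(\calD_T)\triangleq\sum_i 2\yay{p_i-\tfrac12}w_i$ is positive,
\begin{equation*}
	\condprob{S\leq 0}{Y=1,\calD_T}\;\leq\;\exp\yay{-\frac{2\,\mu(\calD_T)^2}{\sum_{i=1}^N(2w_i)^2}}\;\leq\;\exp\yay{-\frac{2\,\mu(\calD_T)^2}{N}},
\end{equation*}
the last step using nothing but $|w_i|\leq\tfrac12$. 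It is exactly this crude bound $\sum_i(2w_i)^2\leq N$ — unavoidable once the weights are coarse, high-variance estimates rather than the shrinking true weights $p_i-\tfrac12$ — that introduces the extra factor $N$ in the exponent, and hence the $\nicefrac{1}{\sqrt N}$ normalization in the two hypotheses.

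The next step is to analyze the drift. Averaging over $\calD_T$ and using $\expt{\pest{i}{T}}=\pp{i}$ together with the identity $\pp{i}-\tfrac12=\yay{p_i-\tfrac12}\yay{2\pn{i}-1}$, which is immediate from \eqref{pseudo_total_law} (and whose sign is governed by Proposition~\ref{pseudopropmixed}), the mean drift is
\begin{equation*}
	\bar\mu\;\triangleq\;\expt{\mu(\calD_T)}\;=\;2\sum_{i=1}^N\yay{p_i-\tfrac12}^2\yay{2\pn{i}-1}.
\end{equation*}
Assumption~2 enters here: applying the same Hoeffding estimate to each self-excluding majority $\V{\Xnot{i}}$ — whose conditional drift is $2\sum_{j\neq i}\yay{p_j-\tfrac12}$ and whose range-normalization is again $O(N)$ — the threshold $\tfrac1{\sqrt N}\sum_i\yay{p_i-\tfrac12}\geq\sqrt{\frac{\log 2}{8}}$ is precisely what makes that auxiliary bound informative, keeping each $2\pn{i}-1$ (and the weighted combination appearing in $\bar\mu$) bounded below by a positive quantity in the limit; the explicit constant $\sqrt{\frac{\log 2}{8}}$ is simply the value at which that Hoeffding estimate becomes nontrivial. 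Granting this, $\bar\mu\asymp\sum_i\yay{p_i-\tfrac12}^2$, and Assumption~1 supplies the growth: $\bar\mu^2/N\asymp\bigl(\tfrac1{\sqrt N}\sum_i(p_i-\tfrac12)^2\bigr)^2\to\infty$, so the displayed bound tends to $0$ as long as $\mu(\calD_T)$ is not much smaller than $\bar\mu$.

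Finally I would control the estimation error. Write $\mu(\calD_T)-\bar\mu=\tfrac1T\sum_{t=1}^T D_t$ with $D_t=\sum_i 2\yay{p_i-\tfrac12}\bigl(\ind{X_i(t)=\V{\Xnot{i}(t)}}-\pp{i}\bigr)$; the $D_t$ are independent across tasks, have mean zero, and satisfy $|D_t|\leq 2\sum_{i=1}^N|p_i-\tfrac12|$. A Hoeffding bound on this average of bounded increments shows the ``bad'' event $\set{\mu(\calD_T)<\tfrac12\bar\mu}$ has probability at most $2\exp\bigl(-c\,T\,\bar\mu^2/(\sum_i|p_i-\tfrac12|)^2\bigr)$, which vanishes under the two hypotheses (using Cauchy--Schwarz to compare $(\sum_i|p_i-\tfrac12|)^2$ with $N\sum_i(p_i-\tfrac12)^2$). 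Splitting $\prob{f^L(\Xvec{})\neq Y}$ over this event (bounded by $1$) and its complement (where the first display applies with $\mu(\calD_T)\geq\tfrac12\bar\mu$) leaves a sum of two vanishing terms, which gives consistency.

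The hard part is the dependence structure of the pseudo-competence estimates: the $\set{\pest{1}{T},\dots,\pest{N}{T}}$ are built from a common pool of opinions, so $S$ is not a sum of independent terms and one is forced to condition on $\calD_T$ — which turns the otherwise-deterministic drift $\mu(\calD_T)$ into a random object requiring its own concentration control, and that control is tight precisely when $T$ is small. Coupled with this, extracting the exact constant $\sqrt{\frac{\log 2}{8}}$ in Assumption~2 — the sharp point at which the peer majorities $\V{\Xnot{i}}$ become reliable enough to keep the drift positive — is the delicate quantitative step.
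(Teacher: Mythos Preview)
Your argument and the paper's share the same skeleton: Hoeffding's inequality on the weighted sum $\sum_i w_i\eta_i$, with the drift rewritten via $\pp{i}-\tfrac12=(p_i-\tfrac12)(2\pn{i}-1)$, and a second Hoeffding estimate on the unweighted majority to extract from Assumption~2 that each $\epsn{i}=\pn{i}-\tfrac12$ stays bounded away from zero --- the constant $\sqrt{\log 2/8}$ being, in both accounts, exactly the threshold at which $\exp\bigl(-\tfrac{8}{N}(\sum_i\varepsilon_i)^2\bigr)\le\tfrac12$. The difference is in scope: the paper's proof plugs $w_i=\pp{i}-\tfrac12$ --- the \emph{true} pseudo-competence, not the empirical $\pest{i}{T}$ --- straight into the Hoeffding bound and stops once $\tfrac1{\sqrt N}\sum_i 2\varepsilon_i^2\epsn{i}\to\infty$ is seen to follow from the two hypotheses; it never conditions on the data or bounds $\norm{\pestvec{T}-\ppvec{}}$. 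Your conditioning-on-$\calD_T$ / random-drift / bad-event decomposition is therefore genuinely additional work that upgrades the paper's sketch to cover the actual empirical rule \eqref{lightload} at finite $T$. One caveat you pass over: in the block setting the test vector $\Xvec{}(t)$, $t\le T$, is itself part of $\calD_T$, so your conditional-Hoeffding step tacitly treats $\Xvec{}$ as a fresh task independent of the estimation data (or requires a leave-one-out variant of $\pest{i}{T}$); the paper sidesteps this by never leaving the deterministic-weight regime.
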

	Proposition \ref{lowsamplingconsistency} shows that when arbitrarily large number of experts are consulted for a small number of tasks each, the unsupervised empirical PNB rule becomes reliable. 
	
	The first condition of proposition \ref{lowsamplingconsistency} is a direct consequence of the Hoeffding's inequality ans it is sufficient for consistency when each expert is \textit{tested} with a small number of \textit{labeled} tasks and competences are estimated via \eqref{emp_est} \cite[Theorem 7]{berend2015finite}. Interestingly, pseudo competences can facilitate this consistency without the need for labeled data but at the expense of the second condition, which
	is by no means restrictive: As discussed in appendix \ref{app:prop_consistency_low}, it amounts to $\lim_{N\rightarrow \infty} \prob{\V{\dvec{X}}=Y}>\nicefrac{1}{2}$. Similar requirements exist in the Bayesian framework as well \cite{ok2016optimality}. 
	%
	%
	%
	%
	
	When there are sufficiently many tasks to be processed, the error due to empirically estimating $\ppvec{}$ diminishes and pseudo competence bias becomes the dominant factor of performance degradation.  Thus, the PNB rule given in \eqref{pseudonaivebayes} naturally extends to a decision rule \eqref{empiricalpseudonaivebayes} that empirically estimates the pseudo competence of each expert over $T$ tasks and applies to $w\yay{\pp{i}}=\log\nicefrac{\pp{i}}{\pq{i}}$.
	%
	%
	%
	%
	%
	%
	
	For an arbitrary committee $\dvec{p}\in\brac{0,1}^{N}$, the difference between pseudo competences and true competences become unbounded for experts with $p_i\approx \set{0,1}$. However, for absolutely balanced committees, pseudo competence bias is necessarily bounded.  Lemma \ref{deviation} quantifies the committees that limit the difference between PNB and NB weights.
	\begin{lemma}
		\label{deviation} 
		Let $R(\gamma) = \frac{2\gamma (1-\gamma)}{1-2\gamma}$, if an absolutely balanced committee satisfies for some $\epsilon >0$ 
		\begin{equation*}
			\min_{i\in[N]} \pn{i}\geq \frac{1}{2} + \frac{1}{2+ \epsilon R(\gamma)},
		\end{equation*}
		then $\norm{\bm{w}-\tilde{\bm{w}}}_1\leq \frac{\epsilon N}{2}$.
	\end{lemma}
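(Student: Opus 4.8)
The plan is to pass from the $\ell_1$ bound to an expert-by-expert estimate. Writing $w(x)=\log\frac{x}{1-x}$, so that $\bm{w}=\bigl(w(p_i)\bigr)_{i\in[N]}$ and $\tilde{\bm{w}}=\bigl(w(\pp{i})\bigr)_{i\in[N]}$ are the weight vectors of \eqref{naivebayes} and \eqref{pseudonaivebayes}, it suffices to show that $\abs{w(p_i)-w(\pp{i})}\le\nicefrac{\epsilon}{2}$ for every $i\in[N]$, since then $\norm{\bm{w}-\tilde{\bm{w}}}_1=\sum_{i=1}^{N}\abs{w(p_i)-w(\pp{i})}\le\nicefrac{\epsilon N}{2}$. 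Fix $i$ and abbreviate $p=p_i$, $q=q_i$, $\pi=\pn{i}$, $\bar\pi=\qn{i}=1-\pi$, and $r=\nicefrac{p}{q}$. If $p=\nicefrac{1}{2}$ then \eqref{pseudo_total_law} gives $\pp{i}=\nicefrac{1}{2}$ and both weights vanish. Since $\pn{i}$ is a function of the \emph{other} experts only, swapping $p\leftrightarrow 1-p$ leaves $\pn{i}$ fixed and, by \eqref{pseudo_total_law}, maps $\pp{i}\mapsto 1-\pp{i}$; hence $w(p)-w(\pp{i})$ merely flips sign under this swap, and we may assume $p>\nicefrac{1}{2}$, i.e.\ $r\ge 1$.

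The second step is an algebraic reduction. From \eqref{pseudo_total_law} one has $\pp{i}=p\pi+q\bar\pi$ and $1-\pp{i}=p\bar\pi+q\pi$, so dividing by $q$ gives $\frac{\pp{i}}{1-\pp{i}}=\frac{r\pi+\bar\pi}{r\bar\pi+\pi}$ and therefore
\begin{equation*}
w(p)-w(\pp{i})=\log\frac{p\,(1-\pp{i})}{q\,\pp{i}}=\log\yay{1+\frac{(r^{2}-1)\bar\pi}{r\pi+\bar\pi}}\ \ge\ 0,
\end{equation*}
which in particular recovers $\pp{i}\le p_i$ from Proposition \ref{pseudoprop}. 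Applying $\log(1+x)\le x$ and inserting $r-1=\frac{2p-1}{q}$, $r+1=\frac{1}{q}$ together with $\bar\pi=\nicefrac{1}{2}-s$, where $s\triangleq\pn{i}-\nicefrac{1}{2}>0$, the estimate collapses to
\begin{equation*}
0\ \le\ w(p)-w(\pp{i})\ \le\ \frac{(2p-1)(1-2s)}{q\,\bigl(1+2s(2p-1)\bigr)}.
\end{equation*}

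The third step feeds in the hypothesis. A short computation shows the right-hand side above is strictly decreasing in $s$, so its maximum over the admissible range is attained at the smallest margin $s=\frac{1}{2+\epsilon R(\gamma)}$, which is exactly what the assumed bound $\min_i\pn{i}\ge\nicefrac{1}{2}+\frac{1}{2+\epsilon R(\gamma)}$ provides. Substituting this value and simplifying, the bound becomes $\frac{(2p-1)\,\epsilon R(\gamma)}{4pq+q\,\epsilon R(\gamma)}$, and since the extra term $q\,\epsilon R(\gamma)$ in the denominator only helps, it suffices to verify $R(\gamma)\le\frac{2pq}{2p-1}$ to conclude $w(p)-w(\pp{i})\le\nicefrac{\epsilon}{2}$. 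Finally, the map $p\mapsto\frac{2p(1-p)}{2p-1}$ is strictly decreasing on $(\nicefrac{1}{2},1)$ (the numerator of its derivative equals $-2(2p^{2}-2p+1)$, which is always negative) and tends to $\frac{2\gamma(1-\gamma)}{1-2\gamma}=R(\gamma)$ as $p\uparrow 1-\gamma$; by absolute balance $p\in(\gamma,1-\gamma)$, so $R(\gamma)<\frac{2pq}{2p-1}$ strictly. This closes the per-expert estimate, and summing over $i\in[N]$ gives the lemma.

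The step I expect to be the main obstacle is the algebraic reduction of the second paragraph together with the closing one-variable optimization: the point is to shape $w(p)-w(\pp{i})$ so that $\log(1+x)\le x$ produces a bound that is affine in the peer deficit $1-2s$, and then to recognise that the constant $R(\gamma)$ in the statement is precisely $\inf\{\,\nicefrac{2p(1-p)}{(2p-1)}:p\in(\nicefrac{1}{2},1-\gamma)\,\}$, which is what makes the stated threshold on $\pn{i}$ the right one rather than merely a sufficient-looking one. The monotonicity-in-$s$ observation is the device that reduces the whole family of admissible peer-competences to its boundary value, and the symmetry $p\leftrightarrow 1-p$ handles the unreliable experts; the remaining manipulations are bookkeeping.
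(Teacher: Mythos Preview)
Your proof is correct and follows essentially the same route as the paper: bound $\abs{w(p_i)-w(\pp{i})}$ expert-by-expert via a first-order inequality, then optimize over $p_i\in(\gamma,1-\gamma)$ to recognise the constant $R(\gamma)=\inf_{p\in(\nicefrac{1}{2},\,1-\gamma)}\nicefrac{2p(1-p)}{(2p-1)}$, and finally read off the threshold on $\pn{i}$. The only cosmetic difference is the choice of first-order bound---you apply $\log(1+x)\le x$ to the ratio $\nicefrac{p_i(1-\pp{i})}{q_i\,\pp{i}}$, whereas the paper reuses the Taylor expansion of $\log\frac{1+a+x}{1-a+x}$ in $x=\nicefrac{1}{2\epsn{i}}-1$ already set up in the proof of Theorem~\ref{limitperformance}; both collapse to the same condition $\epsn{i}\ge\frac{1}{2+\epsilon R(\gamma)}$.
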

	Observe $R(\gamma)$ increases in $\gamma$, equivalently, committees $\dvec{}$ that concentrate towards the center of the cube $\brac{0,1}^{N}$ yield closer weights. Conceptually, this amounts to discussion on weak classifiers; it is often easier to boost weak classifiers to form a stronger one \cite{schapire2012boosting}. The next theorem jointly addresses the empirical estimation error and pseudo competence bias:
	\begin{thm}
		\label{highsampling}
		Let a committee be consistent with rate $a_N$,  $\forall \delta \in \left(0,1\right)$ define $C(\delta;N,T) \triangleq \frac{12}{T}\log\frac{8N}{\delta}$. Then, 
		\begin{equation*}
			\forall \epsilon \in \left(\left(\rho_N C(\delta;N,T)\right)^{\nicefrac{1}{3}},\min\lb 5, \frac{2\Phi}{N}\rb\right),
		\end{equation*}
		and for all absolutely balanced committees with parameter $\gamma >  C(\delta;N,T)\left(\frac{2}{\sqrt{4\epsilon+1}-1}\right)^2$:
		\begin{equation*}
			\prob{f^{B}(\mathbf{X})\neq \rvec{Y}} \leq \delta + \exp\left[-\frac{\left(2\Phi -\epsilon N\right)^2}{8\Phi}\right].
		\end{equation*}
	\end{thm}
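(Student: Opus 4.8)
The plan is to split the error over a high-probability event on which the empirical pseudo competences are accurate, then reduce the remaining analysis to a na\"{i}ve Bayes rule run with \emph{perturbed} weights and rerun the Chernoff argument of Theorem~\ref{improved_upper}. First I would use the uniform prior \eqref{uniform_prior} and the sign-flip symmetry $\dvec{X}\mapsto-\dvec{X}$ to reduce to bounding $\condprob{f^{B}(\dvec{X})\neq Y}{Y=1}$. Write $\hat{w}_i=w(\pest{i}{T})$ for the weights actually used by $f^{B}$ and $w_i=w(p_i)$ for the ideal na\"{i}ve Bayes weights. Since $\ind{X_i(t)=\V{\Xnot{i}(t)}}$ is a bounded i.i.d.\ process with mean $\pp{i}$, Hoeffding's inequality and a union bound over the $N$ experts produce an event $\mathcal{E}$, with $\prob{\mathcal{E}^{c}}\le\delta$, on which $\norm{\pestvec{T}-\ppvec{}}_{\infty}$ is at most a constant multiple of $\sqrt{C(\delta;N,T)}$; on $\mathcal{E}^{c}$ I bound the error trivially by $1$, contributing the $\delta$ term. (The numerical constant hidden in $C(\delta;N,T)$ is exactly what this union bound, possibly with a Bernstein-type $\nicefrac{1}{T}$ correction, dictates.)

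Next, on $\mathcal{E}$ I would control $\norm{\hat{\bm{w}}-\bm{w}}_{1}$. The triangle inequality splits it into a bias part $\norm{\tilde{\bm{w}}-\bm{w}}_{1}$, bounded by $\nicefrac{\epsilon N}{2}$ via Lemma~\ref{deviation} — this is where the hypothesis coupling $a_{N}$, $\rho_{N}$ and $\epsilon$ enters, through the fact that $\min_{i}\pn{i}$ is bounded below in terms of the consistency rate — and an estimation part $\sum_{i}\abs{w(\pest{i}{T})-w(\pp{i})}$. Proposition~\ref{pseudopropmixed} gives $\pp{i}\in(\gamma,1-\gamma)$ for an absolutely balanced committee, and the hypothesis $\gamma>C(\delta;N,T)\left(\frac{2}{\sqrt{4\epsilon+1}-1}\right)^{2}$ is precisely what keeps each $\pest{i}{T}$ bounded away from $0$ and $1$ on $\mathcal{E}$ and makes $w(\cdot)$ locally Lipschitz with a constant small enough that the estimation part is also at most $\nicefrac{\epsilon N}{2}$; hence $\norm{\hat{\bm{w}}-\bm{w}}_{1}\le\epsilon N$, and moreover every $\hat{w}_i$ lands in a ``safe'' interval containing $w_i$ on which the per-term inequality below holds.

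Finally, on $\mathcal{E}$ I would mimic the proof of Theorem~\ref{improved_upper}: Markov's inequality applied to $e^{-\sum_{i}X_i\hat{w}_i/2}$, together with conditional independence of the opinions given $Y=1$, yields
\begin{equation*}
	\condprob{f^{B}(\dvec{X})\neq Y}{Y=1,\mathcal{E}}\le\prod_{i=1}^{N}\bigl(p_i e^{-\hat{w}_i/2}+q_i e^{\hat{w}_i/2}\bigr).
\end{equation*}
On the safe interval each factor is at most $e^{-(p_i-1/2)\hat{w}_i/2}$; for $\hat{w}_i=w_i$ this is the elementary bound $\log\cosh(w_i/2)\ge(p_i-\nicefrac{1}{2})\,w_i/2$, equivalently $-\log(4p_iq_i)\ge(p_i-\nicefrac{1}{2})\log\nicefrac{p_i}{q_i}$, and it persists for nearby $\hat{w}_i$ by convexity of the conditional cumulant. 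Multiplying the factors gives $\exp\!\left(-\tfrac{1}{2}\sum_{i}(p_i-\nicefrac{1}{2})\hat{w}_i\right)$; since $\sum_{i}(p_i-\nicefrac{1}{2})\hat{w}_i$ differs from $\Phi$ by at most $\tfrac{1}{2}\norm{\hat{\bm{w}}-\bm{w}}_{1}\le\tfrac{\epsilon N}{2}$, this is at most $\exp(\epsilon N/4-\Phi/2)$. An elementary manipulation — valid exactly because $0\le\epsilon N<2\Phi$ — shows $\exp(\epsilon N/4-\Phi/2)\le\exp\!\left(-\frac{(2\Phi-\epsilon N)^{2}}{8\Phi}\right)$, and adding back the $\delta$ from $\mathcal{E}^{c}$ finishes the proof.

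I expect the main obstacle to be the weight-deviation step: one must simultaneously guarantee that the estimated weights are $\ell_{1}$-close to the ideal ones \emph{and} that every one of them stays inside the safe interval where the per-term exponential bound holds, with all the slack budgeted against the single parameter $\epsilon$. This is what forces the particular admissible ranges for $\gamma$ and $\epsilon$ — in particular the lower bound $(\rho_{N}\,C(\delta;N,T))^{\nicefrac{1}{3}}$, which ties the bias control through $\rho_{N}$ to the estimation control through $C(\delta;N,T)$ — and it also needs some care for weak, possibly sign-flipped experts. A secondary technical point is the mild dependence between the opinions entering $\pest{i}{T}$ and the opinions aggregated on a task within the block; this is handled either by applying $f^{B}$ out of sample or by noting that any single task perturbs each $\pest{i}{T}$ by $O(\nicefrac{1}{T})$, which is absorbed into the estimation-error budget.
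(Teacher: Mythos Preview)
Your high-level decomposition (bias via Lemma~\ref{deviation} plus estimation error, then a Chernoff-type tail) matches the paper, but the step where you run the Chernoff argument \emph{with the random empirical weights} $\hat w_i$ is where the plan diverges and acquires a real gap. Once you condition on $\mathcal{E}$, the weights $\hat w_i$ are measurable functions of the \emph{same} block of opinions you are aggregating, so the factorization $\expt{e^{-\sum_i X_i\hat w_i/2}}=\prod_i(p_ie^{-\hat w_i/2}+q_ie^{\hat w_i/2})$ is not available: conditional independence of the $X_i$ given $Y$ does not survive the further conditioning on $\mathcal{E}$, and the $\hat w_i$ are not constants. You flag this as ``secondary'' and offer an out-of-sample fix, but out-of-sample aggregation is not what $f^{B}$ does in the theorem; the $O(1/T)$ leave-one-out patch would need a genuine stability argument, not a remark. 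A second soft spot is the per-term bound $p_ie^{-\hat w_i/2}+q_ie^{\hat w_i/2}\le e^{-(p_i-1/2)\hat w_i/2}$: it holds at $\hat w_i=w_i$ and on $[0,w_i]$ by the convexity argument you sketch, but for $\hat w_i>w_i$ convexity points the wrong way (indeed $w_i$ is the minimizer of the log-MGF), so you would need the slack $-\log(4p_iq_i)-(p_i-\tfrac12)w_i$ to absorb a quadratic overshoot, which you have not budgeted against $\epsilon$.

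The paper avoids both issues by never running Chernoff with random weights. It uses the elementary implication
\[
\bigl\{\tilde{\bm w}(T)\!\cdot\!\bm\eta\le 0\bigr\}\ \subset\ \bigl\{\bm w\!\cdot\!\bm\eta\le \epsilon N\bigr\}\ \cup\ \bigl\{\norm{\bm w-\tilde{\bm w}}_1+\norm{\tilde{\bm w}-\tilde{\bm w}(T)}_1>\epsilon N\bigr\},
\]
so the tail term is $\prob{\bm w\cdot\bm\eta\le\epsilon N}$ with the \emph{deterministic} ideal weights; Kearns--Saul then gives $\exp\!\bigl[-(2\Phi-\epsilon N)^2/(8\Phi)\bigr]$ directly, with no per-factor inequality and no dependence headache. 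The second event is handled exactly as you propose: Lemma~\ref{deviation} kills the bias half, and a multiplicative Chernoff bound (not Hoeffding---this is where the constant $12$ and the condition $\gamma>C(\delta;N,T)\bigl(\tfrac{2}{\sqrt{4\epsilon+1}-1}\bigr)^2$ come from) plus Proposition~\ref{pseudopropmixed} kills the estimation half with probability $1-\delta$. Your derivation of the lower bound $(\rho_N C(\delta;N,T))^{1/3}$ on $\epsilon$ from coupling Lemma~\ref{deviation} to the balance condition is the right idea and matches the paper. If you replace your conditioned-Chernoff step by the union-bound split above, the rest of your outline goes through.
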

	Property \ref{pseudopropmixed} along with Lemma \ref{deviation} allows $f^{B}(\cdot)$ to scale similar to an empirical NB as long as the underlying worker committee is sufficiently strong, which is captured by $\delta$. Theorem \ref{highsampling} borrows its empirical analysis from that of \cite[Theorem 11]{berend2015finite}, which quantifies the performance of empirical NB under sufficiently long training. 
	
	Albeit insightful, Proposition \ref{lowsamplingconsistency} and Theorem \ref{highsampling} analyze the performance of empirical PNB decisions rules for a \textit{block} of opinions. An adaptive and instantaneous opinion aggregation rule is addressed next.
	\subsection{Adaptive Opinion Aggregation}
	\label{SEC:Realtime}
	Let $f^{A}_{\tau}(\cdot)$ denote the empirical pseudo na\"{i}ve Bayes decision rule defined by the weights $\tilde{\bm{w}}(\tau)$ as given in \eqref{adaptivepseudonaivebayes}.
	%
	We call the probability that the decision rule $f^{A}_{\tau}(\cdot)$ makes the correct decision based on $X(t)$, that is, $\prob{f^{A}_{\tau}(\Xvec{}(t))=Y}$, for $\forall t>\tau$, the \textit{confidence} of the real-time decision rule. Theorem \ref{online} characterizes this notion of confidence:
	\begin{thm}
		\label{online}
		Let $\delta \geq \sum_{i=1}^{N}\left|p_i-\pp{i}\right|+\frac{N}{\sqrt{\tau}}$ and define the event $R(\tau)$ as follows:
		\begin{equation*}
			\exp\left(-\frac{1}{2}\sum_{i=1}^{N}\left(\pp{i}(\tau)-\frac{1}{2}\right)\log\frac{\pp{i}(\tau)}{1-\pp{i}(\tau)}\right) \leq \frac{\delta}{2}.
		\end{equation*}
		Then $\forall t>\tau$, $\prob{R(\tau)\cap \lb f^{A}_{\tau}\left(\mathbf{X}(t)\right) \neq Y \rb} \leq \delta$.
	\end{thm}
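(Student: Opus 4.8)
The plan is to condition on the first $\tau$ tasks and reduce the claim to a single–task error bound for a rule that has become deterministic. Let $\mathcal{F}_\tau$ denote the information generated by $\set{\dvec{X}(s):s\leq\tau}$; this fixes the empirical pseudo competences $\pest{i}{\tau}$ (which I read as the $\pp{i}(\tau)$ appearing in the definition of $R(\tau)$) and hence the weights $\tilde{\bm{w}}(\tau)$. Because distinct tasks are independent, for any $t>\tau$ the pair $(\dvec{X}(t),Y(t))$ is independent of $\mathcal{F}_\tau$, and conditioned on $\mathcal{F}_\tau$ the rule $f^{A}_{\tau}$ is a \emph{deterministic} weighted–majority rule — in fact it is exactly the na\"ive Bayes rule for the \emph{effective committee} with competences $\pest{i}{\tau}$, only it is fed an instance generated by the \emph{true committee} $\dvec{p}$. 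Writing $\mu$ for the joint law of $(\dvec{X}(t),Y(t))$ under the true committee and $\nu$ for that law under the effective committee (both with the uniform prior on $Y$), the elementary change-of-measure bound $\mu(E)\le\nu(E)+d_{\mathrm{TV}}(\mu,\nu)$ applied to $E=\set{f^{A}_{\tau}(\dvec{X})\neq Y}$ gives, conditionally on $\mathcal{F}_\tau$,
\begin{equation*}
	\prob{f^{A}_{\tau}(\dvec{X}(t))\neq Y \given \mathcal{F}_\tau}\leq \nu\left(f^{A}_{\tau}(\dvec{X})\neq Y\right) + d_{\mathrm{TV}}(\mu,\nu).
\end{equation*}

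Next I would bound the two terms. The first is the na\"ive Bayes probability of error of the effective committee, so Theorem \ref{improved_upper} applied with $p_i\mapsto\pest{i}{\tau}$ yields $\nu(f^{A}_{\tau}(\dvec{X})\neq Y)\leq\prod_i\sqrt{4\,\pest{i}{\tau}(1-\pest{i}{\tau})}$, and the elementary per-factor inequality $\sqrt{4\rho(1-\rho)}\leq\exp\!\big(-\tfrac12(\rho-\tfrac12)\log\tfrac{\rho}{1-\rho}\big)$ turns the right-hand side into $\exp\!\big(-\tfrac12\sum_i(\pest{i}{\tau}-\tfrac12)\log\tfrac{\pest{i}{\tau}}{1-\pest{i}{\tau}}\big)$, which is at most $\nicefrac{\delta}{2}$ precisely on the event $R(\tau)$. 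For the second term, conditioned on $Y$ both $\mu$ and $\nu$ are products of Bernoulli laws differing in the $i$-th coordinate by $\abs{p_i-\pest{i}{\tau}}$, so subadditivity of total variation over independent coordinates gives $d_{\mathrm{TV}}(\mu,\nu)\leq\norm{\dvec{p}-\pestvec{\tau}}_1$, which the triangle inequality splits into the fixed pseudo-competence bias $\sum_i\abs{p_i-\pp{i}}$ plus the empirical estimation error $\sum_i\abs{\pp{i}-\pest{i}{\tau}}$.

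Finally I would integrate against $\ind{R(\tau)}$. Since distinct tasks are independent, $\pest{i}{\tau}$ is an average of $\tau$ i.i.d.\ $\bern{\pp{i}}$ indicators, hence $\expt{\abs{\pp{i}-\pest{i}{\tau}}}\leq\sqrt{\var(\pest{i}{\tau})}\leq\nicefrac{1}{2\sqrt{\tau}}$ and $\expt{\norm{\ppvec{}-\pestvec{\tau}}_1}\leq\nicefrac{N}{2\sqrt{\tau}}$. Using $\ind{R(\tau)}\le1$ on the total-variation term and the $R(\tau)$ threshold on the other,
\begin{equation*}
	\prob{R(\tau)\cap\set{f^{A}_{\tau}(\dvec{X}(t))\neq Y}}=\expt{\ind{R(\tau)}\,\prob{f^{A}_{\tau}(\dvec{X}(t))\neq Y \given \mathcal{F}_\tau}}\leq \frac{\delta}{2}+\sum_{i=1}^{N}\abs{p_i-\pp{i}}+\frac{N}{2\sqrt{\tau}},
\end{equation*}
and the hypothesis $\delta\geq\sum_i\abs{p_i-\pp{i}}+\nicefrac{N}{\sqrt{\tau}}$ closes the bound at $\delta$ once the two error contributions are distributed carefully over $R(\tau)$ and its complement. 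The main obstacle is precisely this last accounting: the change-of-measure penalty is only \emph{linear} in the committee mismatch $\norm{\dvec{p}-\pestvec{\tau}}_1$, so one must check that the slack from the $\nicefrac{\delta}{2}$ threshold in $R(\tau)$ together with the $\nicefrac{N}{\sqrt{\tau}}$ margin built into $\delta$ simultaneously absorb the pseudo-competence bias and the estimation error. A secondary point worth care is that the self-estimated logit weights $w(\pest{i}{\tau})$ can be arbitrarily large when some $\pest{i}{\tau}$ is near $0$ or $1$; routing the mismatch through total variation — which needs no boundedness of the weights — rather than through a mismatched Chernoff exponent sidesteps any appeal to absolute balance at this step.
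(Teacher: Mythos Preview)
Your proposal is correct and follows essentially the same route as the paper: condition on the past so that the weights $\tilde{\bm w}(\tau)$ become deterministic, compare the true committee to the empirical pseudo committee via a total-variation (tensorization) bound, control the surrogate error by the na\"ive Bayes bound for the effective committee, and then integrate using the triangle inequality plus a mean-absolute-deviation estimate for $\norm{\ppvec{}-\pestvec{\tau}}_1$. The only cosmetic difference is that the paper invokes the Kearns--Saul/Berend--Kontorovich bound directly to get $\exp(-\tfrac12\tilde{\Phi}(\tau))$, whereas you pass through Theorem~\ref{improved_upper} and then the per-factor inequality $\sqrt{4\rho(1-\rho)}\le\exp\!\big(-\tfrac12(\rho-\tfrac12)\log\tfrac{\rho}{1-\rho}\big)$; both land in the same place. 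Your flagged ``main obstacle'' about the final $\delta$-accounting is not a defect of your argument: the paper's own proof arrives at $\tfrac{\delta}{2}+\mathbb{E}[\Delta(\tau)]$ with $\mathbb{E}[\Delta(\tau)]\le\sum_i|p_i-\pp{i}|+\tfrac{N}{\sqrt{\tau}}$ and simply declares the proof complete, so any residual constant slack is in the theorem statement rather than in the method.
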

	The term $\sum_{i=1}^{N}\left|p_i-\pp{i}\right| \leq \sum_{i=1}^{N}(1-p_{\setminus i})$, diminishes with the committee potential, which can be seen from Kearns-Saul or Berend-Kontorovich inequalities.
	
	The analysis of \cite{berend2015finite} on the adaptive empirical na\"{i}ve Bayes decision rule is based on the committee potential being empirically estimated from some labeled data to control the worst case performance on the test data. Theorem \ref{online} extends this analysis to use empirical pseudo competences instead, resulting in a real-time algorithm where the player builds an empirical pseudo committee potential and makes decisions with dynamic confidence.
	
	The adaptive decision aggregation rule $f^{A}_{\tau}(\cdot)$ is a sequential decision making mechanism: at any given time $t\in[T]$ the algorithm makes a decision with confidence $\delta$ if $R(\tau)$ has happened for some $\tau <t$ otherwise, it assigns the majority vote. This allows the algorithm to a make decisions with dynamic confidence; once a confidence level is achieved at $t=\tau$, there is no need to keep updating the weights as the decision rule is finalized and algorithm uses that fixed decision rule on the incoming data. 
	\section{Experiments}\label{SEC:Exp}
	\begin{figure}[t!]
		\centering
		\begin{subfigure}[t]{0.24\textwidth}
			\centering
			\includegraphics[width=\textwidth]{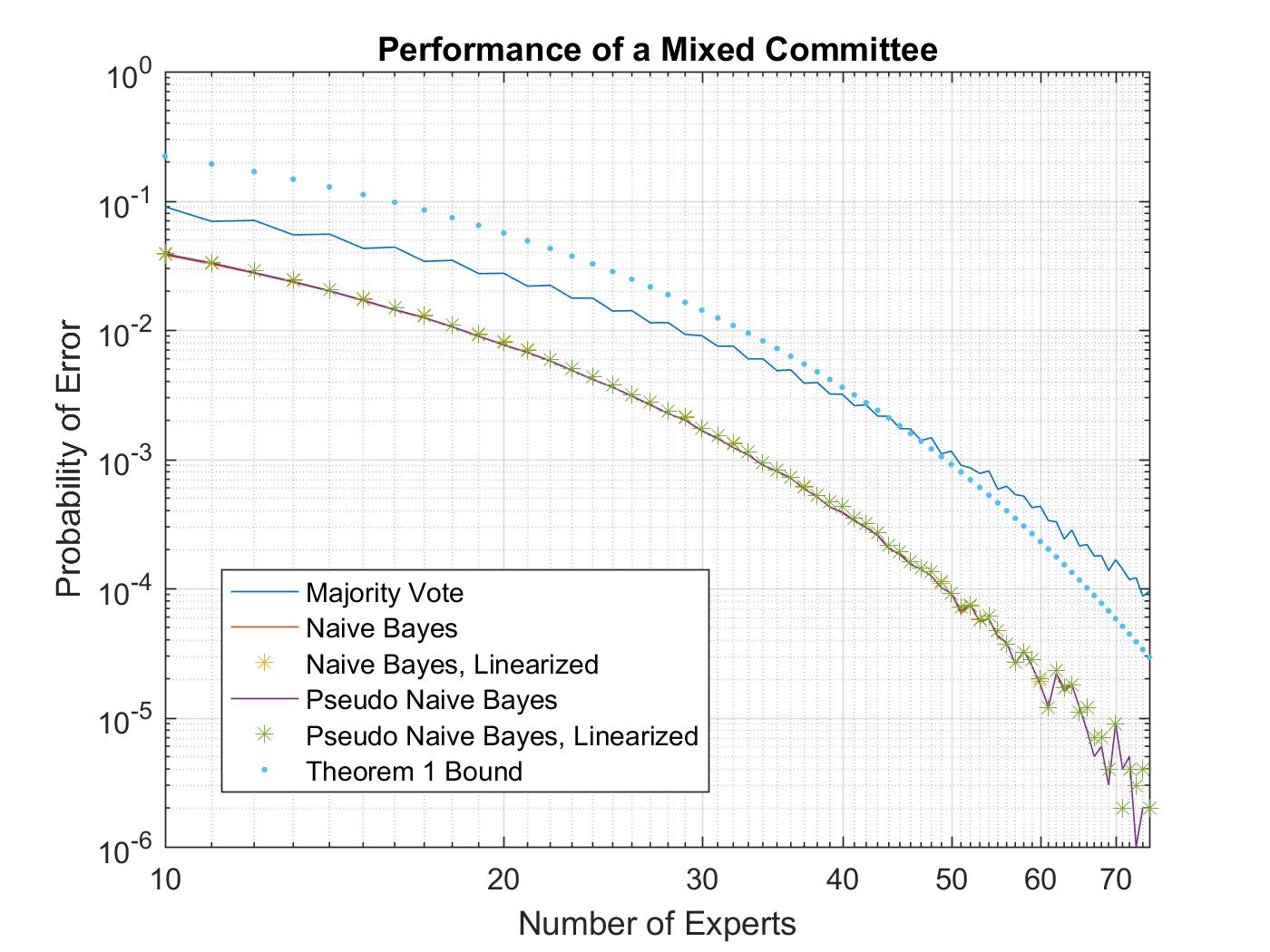}
			\caption{A Good Committee}
			\label{fig:pnb_good}
		\end{subfigure}%
		~ 
		\begin{subfigure}[t]{0.24\textwidth}
			\centering
			\includegraphics[width=\textwidth]{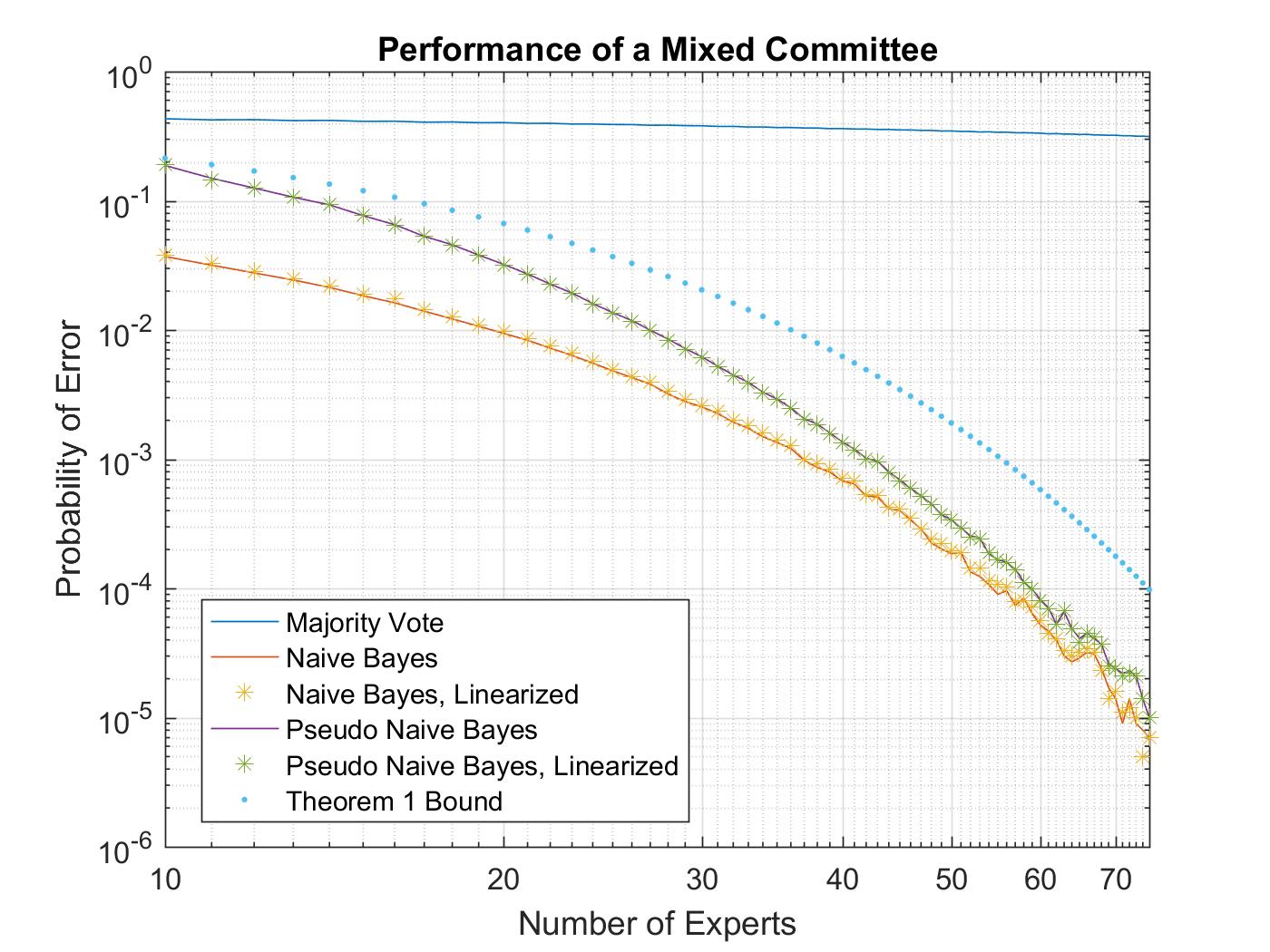}
			\caption{A Mixed Committee}
			\label{fig:pnb_mixed}
		\end{subfigure}
		\caption{A Performance Comparison between Na\"{i}ve Bayes Rules and Their Unsupervised Counterparts}
		\label{fig:pnb_overall}
	\end{figure}
	We briefly illustrate through synthetic data the impact of mixed/good committees of varying sizes $N$ and balancing parameters $\gamma$ on the performance of pseudo na\"{i}ve Bayes decision rule.   
	
	In our experiments, pseudo-random tasks and experts that satisfy \eqref{uniform_prior}-\eqref{opinion_gen} are generated on MATLAB. In order to compare the performance of na\"{i}ve Bayes decision rule to its linearized and unsupervised counterparts, we consider a committee of experts with sizes varying from $N=10$ to $N=75$ with competences equally spaced in the intervals $[0.15, 0.9]$ for the mixed-committee case and from $[0.5, 0.9]$ for the good-committee case. Figure \ref{fig:pnb_overall} compares the performance of NB to that of PNB, as defined in \eqref{pseudonaivebayes}, where pseudo competences are estimated over $T=1e+6$ tasks. 
	
	The objective is to observe the fundamental operational tendencies: In the case good committees, illustrated in fig. \ref{fig:pnb_good}, with the majority vote probability of error starting at the level of $0.1$, the performance difference between PNB and NB are negligible, which is to be expected since $p_i>\nicefrac{1}{2}$ for every expert. In the case of a mixed committee, illustrated in fig. \ref{fig:pnb_mixed}, where the majority vote probability of error varies in the interval $\yay{0.4,0.37}$ for every committee size, the performance difference with PNB and NB rules is evident. Furthermore, in the case of a mixed committee the performance PNB converges to that of NB due to the robustness of the performance of majority vote over large populations against individual expert exclusions.
	\begin{figure}[t!]
		\centering
		\begin{subfigure}[t]{0.24\textwidth}
			\centering
			\includegraphics[width=\textwidth]{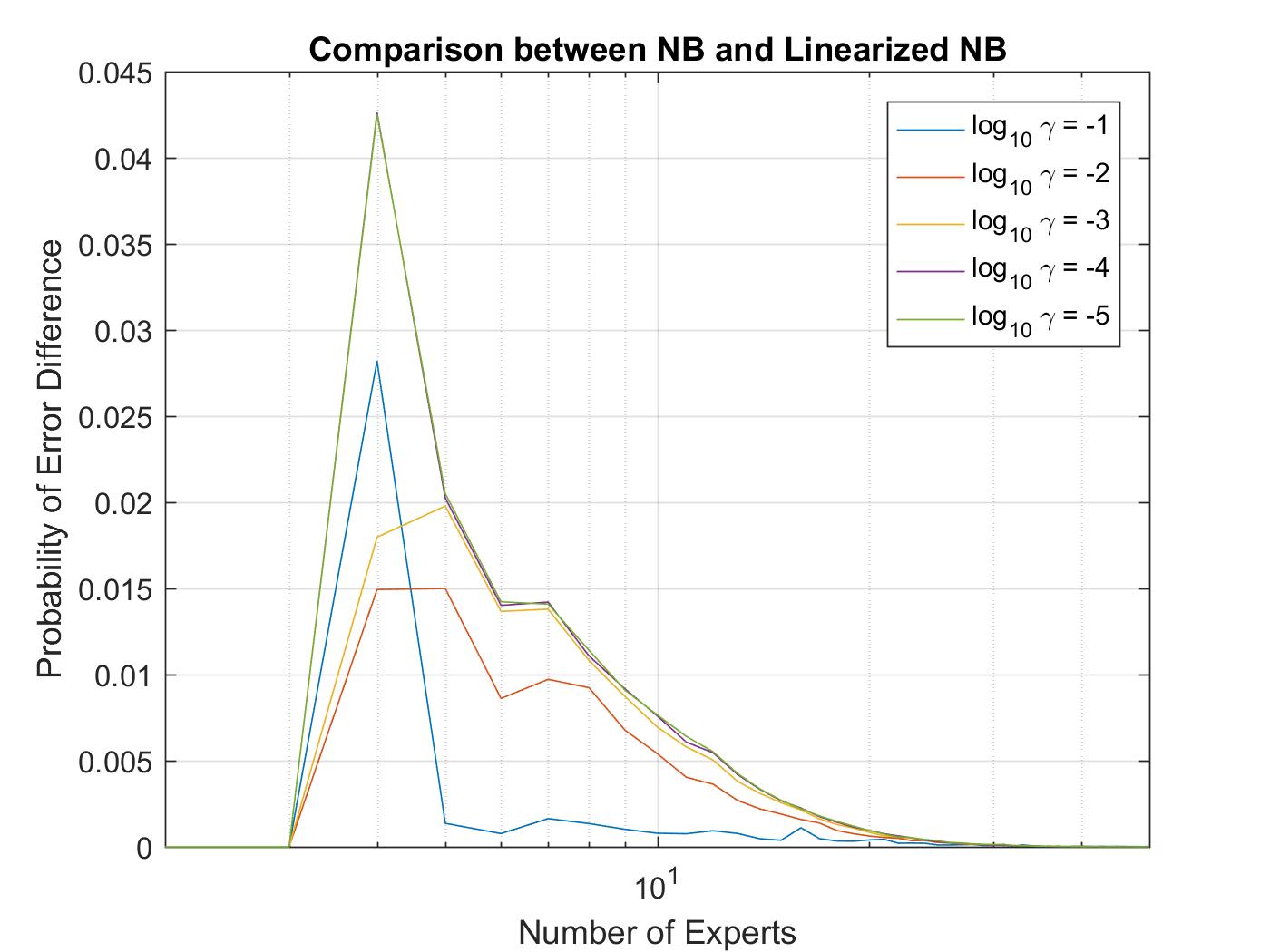}
			\caption{NB vs. LNB}
			\label{fig:nb_vs_lnb}
		\end{subfigure}%
		~ 
		\begin{subfigure}[t]{0.24\textwidth}
			\centering
			\includegraphics[width=\textwidth]{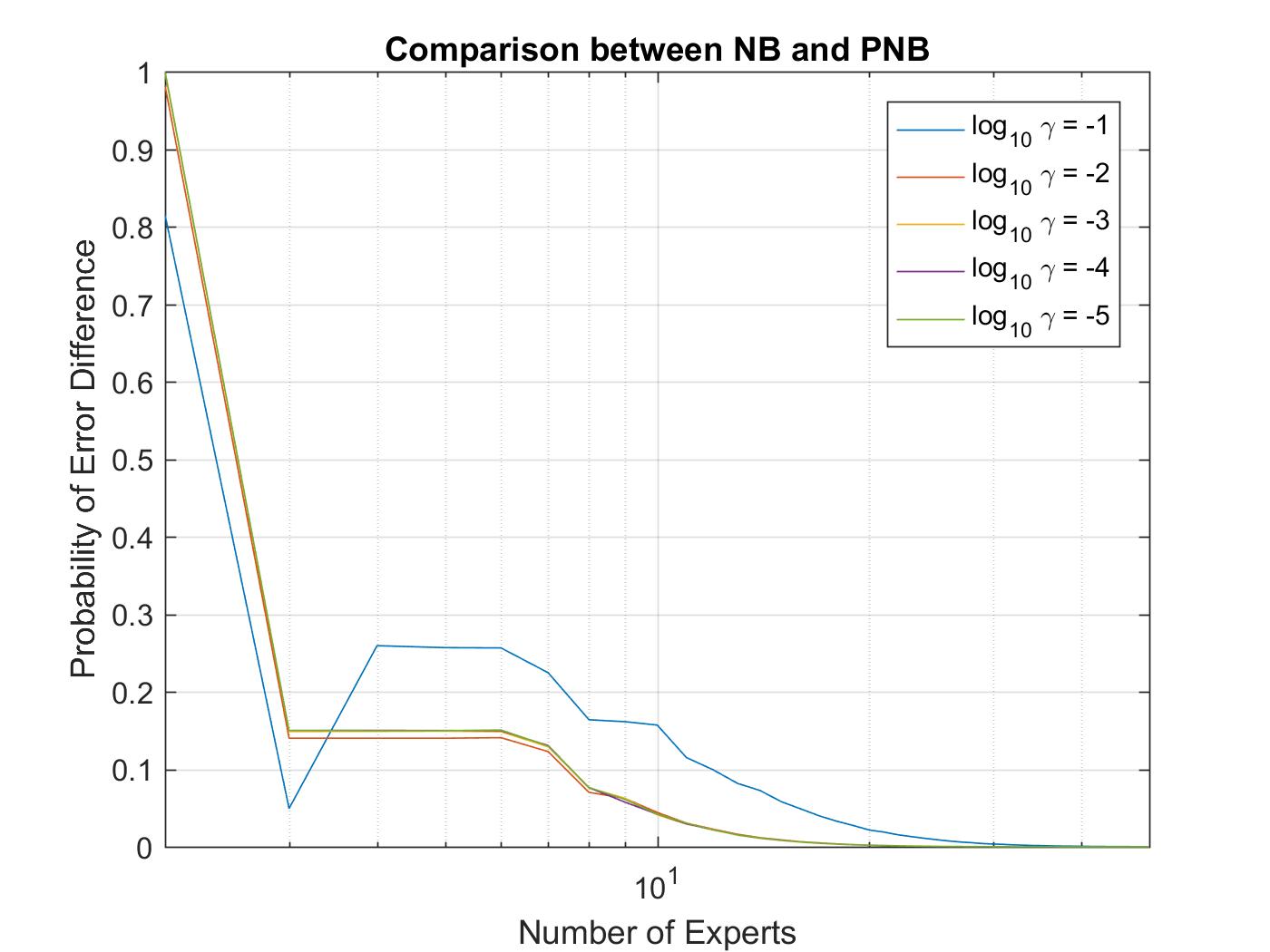}
			\caption{NB vs. PNB}
			\label{fig:nb_vs_pnb}
		\end{subfigure}
		\begin{subfigure}[t]{0.24\textwidth}
			\centering
			\includegraphics[width=\textwidth]{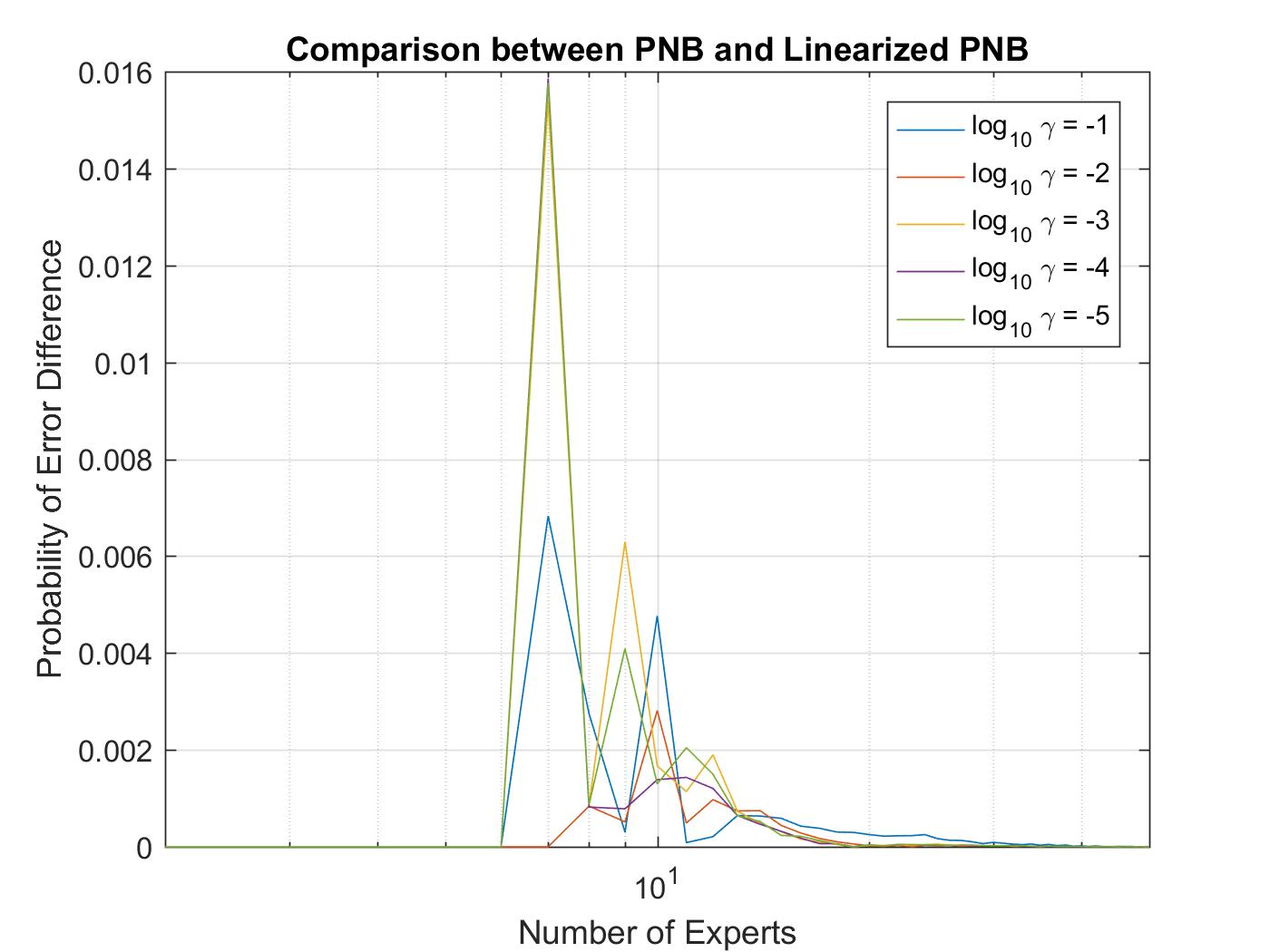}
			\caption{PNB vs. Linearized PNB}
			\label{fig:pnb_vs_lpnb}
		\end{subfigure}%
		~ 
		\begin{subfigure}[t]{0.24\textwidth}
			\centering
			\includegraphics[width=\textwidth]{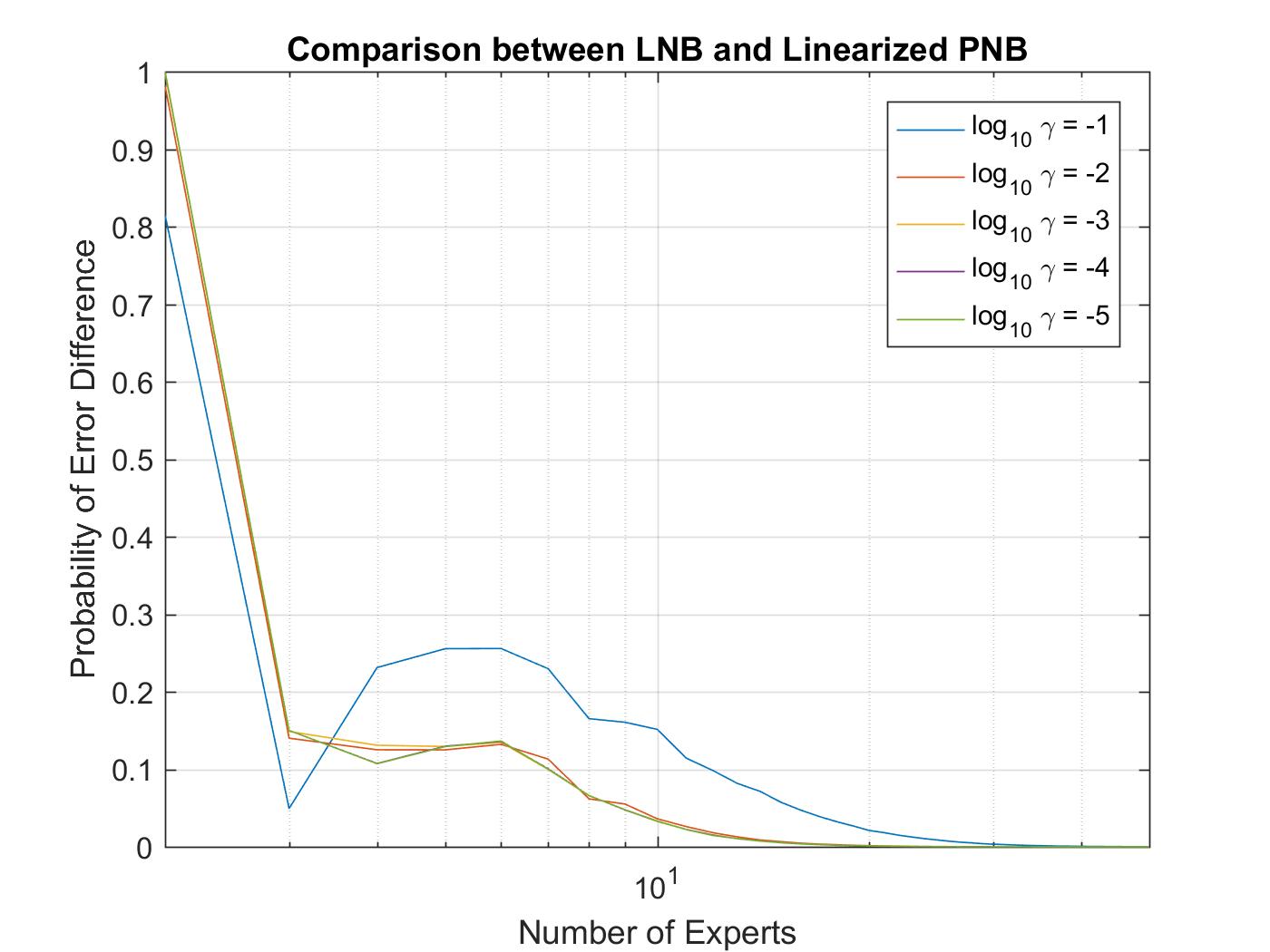}
			\caption{LNB vs. Linearized PNB}
			\label{fig:lnb_vs_lpnb}
		\end{subfigure}
		\caption{Impact of Balancing Parameter $\gamma$ on Performance Degradation}
		\label{fig:balancing}
	\end{figure}
	
	Figure \ref{fig:balancing} illustrates the impact of the balancing parameter $\gamma =1e-n$ for $n\in\set{1,\cdots,5}$. We allow competences to be picked equally spaced from $\brac{0.5,1-\gamma}$ for committee sizes $N\in\set{2,\cdots,50}$. The performance degradation with respect to that of NB rule, for LNB, as seen in fig. \ref{fig:nb_vs_lnb}, and for PNB, as seen in fig. \ref{fig:nb_vs_pnb}, increases as $\gamma$ decreases. This is to be expected as the probability of error for NB decision rule is necessarily bounded by $\gamma$, which is unattainable by other rules due to weight-linearization and pseudo competence estimation. 
	\appendices
	\section{Proof of Proposition \ref{pseudoprop}}
	\label{app:pseudoprop}
	\begin{proof}[Proof of Part (1)-Ordering]
		Consider any pairs of experts $(X_i,X_j)$ for $i\neq j$, and allow $\eta_i = \ind{X_i = Y}$. Observe that $\eta_i \perp \eta_j, \forall i\neq j$ are Bernoulli random variables with parameter $p_i$, denoted by $\mathcal{B}(p_i)$. Successive application of the law of total probability yields that:
		\begin{align*}
			\pp{i} 
			&= \prob{X_i = \V{\Xnot{i}}} = \sum_{\eta_i}\condprob{X_i = \V{\Xnot{i}}}{\eta_i}\prob{\eta_i}  \\
			&= \sum_{\eta_i, \eta_j} \condprob{X_i = \V{\Xnot{i}}}{\eta_i,\eta_j}\condprob{\eta_j}{\eta_i}\prob{\eta_i} .
		\end{align*}	
		Observe that $\condprob{\eta_j}{\eta_i} = \prob{\eta_j}$ due to the conditional independence of opinions (hence the independence of opinion \textit{generation} $\eta_i$). A similar extension of $\pp{j}$ yields that:
		\begin{align*}
			\pp{j} = \sum_{\eta_i, \eta_j} \condprob{X_j = \V{\Xnot{j}}}{\eta_i,\eta_j}\prob{\eta_i}\prob{\eta_j}.
		\end{align*}	
		Since the rest of the committee is arbitrary, yet fixed, the following conditional probabilities are equal: 
		\begin{align*}
			\condprob{\V{\Xnot{i}}=Y}{X_j=Y} &=\condprob{\V{\Xnot{j}}=Y}{X_i=Y}, \\
			\condprob{\V{\Xnot{i}}\neq Y}{X_j\neq Y} &=\condprob{\V{\Xnot{j}}\neq Y}{X_i\neq Y},
		\end{align*}
		and $p_iq_j - p_jq_i = (p_i-p_j)$. Therefore, the ratio between the differences between pseudo competences and true competences are given by:
		\begin{align}
			&\nicefrac{\pp{i}-\pp{j}}{p_i-p_j} = \label{pseudocompetencediff} \\
			&\brac{\condprob{\V{\Xnot{j}}=Y\!}{X_i\neq Y} -\condprob{\V{\Xnot{j}}\neq Y\!}{\!X_i=Y}}. \nn 
		\end{align}
		As long as \eqref{pseudocompetencediff} is positive, pseudo competence preserves ordering. We now show that \eqref{pseudocompetencediff} is monotonically increasing in $p_i\in\yay{\nicefrac{1}{2},1}$, $\forall i$ and the minimum is zero. Observe that total law of probability yields by \eqref{bigequation} that $\forall n\in[N]$:
		\begin{figure*}[t]
			\begin{equation}
				\label{bigequation}
				\frac{\partial}{\partial p_k} \prob{\sum_{i=1}^{N}\eta_i \geq n} 
				= \frac{\partial}{\partial p_k} \!\brac{p_k \prob{\sum_{i\neq k}\eta_i \geq n-1} \!+ q_k \prob{\sum_{i\neq k}\eta_i \geq n}} 
				\!= \prob{\sum_{i\neq k}\eta_i \geq n-1}\!-\prob{\sum_{i\neq k}\eta_i \geq n}
			\end{equation}
		\end{figure*}
		\begin{align}
			\frac{\partial}{\partial p_k} \prob{\sum_{i=1}^{N}\eta_i \geq n} &= \prob{\sum_{i\neq k}\eta_i = n-1} \geq 0 \label{derivative_result1} \\
			\frac{\partial}{\partial p_k} \prob{\sum_{i=1}^{N}\eta_i \leq n} &= - \prob{\sum_{i\neq k}\eta_i = n} \leq 0 \label{derivative_result2} 
		\end{align}
		
		Consequently, $\condprob{\V{\Xnot{j}}=Y}{X_i\neq Y}$ decreases in $p_i$, where $\condprob{\V{\Xnot{j}}\neq Y}{X_i=Y}$ increases, $\forall i$. Therefore,
		\begin{align*}
			\min_{\substack{\mathbf{p} \\ p_i> \nicefrac{1}{2}}} \frac{\pp{i}-\pp{j} }{p_i-p_j} 
			= \lim_{\mathbf{p}\rightarrow\nicefrac{\mathbf{1}}{2}} \frac{\pp{i}-\pp{j} }{p_i-p_j} =0,
		\end{align*}
		which yields that $\pp{i}>\pp{j} \iff p_i > p_j$, if $p_i>\nicefrac{1}{2}$, $\forall i$.
	\end{proof}
	Conceptually, \eqref{derivative_result1}-\eqref{derivative_result2} dictate that increasing the competence of an expert necessarily decreases the probability of error for majority vote. One should note that this does not contradict the discussion in Section \ref{SEC:PseudoComp} as the consistency of majority vote is concerned with \textit{adding} a new expert, which does \textit{not} ensure any monotonicity, instead of \textit{increasing} the competence of an expert, which, as shown, \textit{does} ensure monotonicity.
	\begin{proof}[Proof of Part (2)]
		Observe that $\pn{i}> \nicefrac{1}{2}$ for all finite good committees with $p_i>\nicefrac{1}{2}$, $\forall i$. Then, equation \eqref{pseudo_total_law} yields that:
		\begin{align}
			\pp{i} - p_i &= p_ip_{\setminus i} + (1-p_i)(1-p_{\setminus i}) - p_i \nn \\  
			&= (1-2p_i)(1-p_{\setminus i}) < 0, \label{p1} \\
			\pp{i} -\nicefrac{1}{2}&= (p_i-\nicefrac{1}{2})p_{\setminus i} + (\nicefrac{1}{2}-p_i)(1-p_{\setminus i}) \nn \\ 
			&= (p_i-\nicefrac{1}{2})(2p_{\setminus i}-1)>0, \label{p2}.
		\end{align}
		Hence, if $p_i>\nicefrac{1}{2}$, $\forall i$, then $\nicefrac{1}{2}<\pp{i}<p_i$.
	\end{proof}
	\section{Proof of Proposition \ref{pseudopropmixed}}
	\label{app:mixedpseudo}
	\begin{proof}[Proof of Part (1)]
		Equation \eqref{pseudocompetencediff} indicates that any committee satisfying
		\begin{equation*}
			\condprob{\V{\Xnot{j}}=Y}{X_i\neq Y} > \condprob{\V{\Xnot{j}}\neq Y}{X_i=Y}
		\end{equation*}
		preserves ordering. For every consistent committee, $\exists N^{*}$ such that $\forall N>N^{*}$
		\begin{equation*}
			\prob{\sum_{k\neq i,j} \eta_{k} > \floor*{\frac{N-2}{2}}} > \prob{\sum_{k\neq i,j} \eta_{k} < \ceil*{\frac{N-2}{2}}},
		\end{equation*}
		which yields that pseudo competence preserves ordering for consistent mixed committees.
	\end{proof}
	\begin{proof}[Proof of Part (2)]
		For every consistent committee, $\exists N^{*}$ such that $\forall N> N^{*}$, $\pn{i}>\nicefrac{1}{2}$. Recall equations \eqref{p1}-\eqref{p2}, which yield that:
		\begin{align*}
			\nicefrac{1}{2} &< \pp{i} < p_i \hspace{1em} \text{if} \hspace{1em} p_i>\nicefrac{1}{2}, \\
			\nicefrac{1}{2} &> \pp{i} > p_{i} \hspace{1em} \text{if} \hspace{1em} p_i<\nicefrac{1}{2}, \\
			\nicefrac{1}{2} &= p_i = \pp{i} \hspace{1em} \text{if} \hspace{1em} p_i=\nicefrac{1}{2}.
		\end{align*}
		Therefore, $\min\lb \pp{i},1-\pp{i}\rb \geq \min\lb p_i,1-p_i\rb$ for consistent mixed committees.
	\end{proof}
	\section{Proof Theorem \ref{improved_upper}}
	\label{app:thm_improved_NB}
	\begin{proof}[Proof Theorem \ref{improved_upper}]
		Let $w_i = \log\nicefrac{p_i}{q_i}$. Chernoff bounding technique yields that  \cite[Chapter 2.2.1]{raginsky2013concentration}:
		\begin{equation*}
			\prob{f^{NB}\yay{\rvec{X}} \neq Y} \leq e^{-t\Phi} \expt{\exp\yay{-t \sum_{i=1}^{N}w_i \yay{\eta_i-p_i}}}.
		\end{equation*}
		Observe that the expectation is with respect to $\eta_i\sim\mathcal{B}\yay{p_i}$:
		\begin{equation*}
			\expt{e^{-t \sum\limits_{i}w_i \yay{\eta_i-p_i}}} = p_i e^{-q_i w_i t} + q_i e^{p_i w_i t}.
		\end{equation*}
		Therefore, the probability of error for the NB rule:
		\begin{align}
			\prob{f^{NB}\yay{\rvec{X}} \neq Y} 
			&\leq e^{-t\Phi} \prod_{i} \yay{p_i e^{-q_i w_i t} + q_i e^{p_i w_i t}}  \nn \\ 
			\label{target_local}
			&\hspace{-5em}= \exp\underbrace{\yay{\textstyle -t\Phi + \sum_{i} \log\yay{p_i e^{-q_i w_i t} + q_i e^{p_i w_i t}}}}_{\triangleq -t\Phi+\phi(t;\dvec{p})}.  
		\end{align}
		The derivative of $-t\Phi+\phi(t;\dvec{p})$ is given in \eqref{big_derivative}, which yields that \eqref{target_local} is minimized when $t=1$. Hence,
		\begin{figure*}[t]
			\begin{equation}
				\label{big_derivative}
				\frac{\partial}{\partial t} \yay{\phi(t;\dvec{p})-t\Phi}
				= \sum_i w_i\brac{\frac{-p_i q_i e^{-q_i w_i t} + p_i q_i e^{p_i w_i t}}{p_i e^{-q_i w_i t} + q_i e^{p_i w_i t}} - \yay{p_i -\nicefrac{1}{2}}}
				= \sum_i \frac{w_i}{2}\brac{\frac{-p_i e^{-q_i w_i t} + q_i e^{p_i w_i t}}{p_i e^{-q_i w_i t} + q_i e^{p_i w_i t}}}		
			\end{equation}
		\end{figure*}
		\begin{align*}
			\prob{f^{NB}\yay{\rvec{X}} \neq Y} 
			&\leq \exp\yay{-\Phi + \phi(1;\dvec{p})} \\
			&= \exp\yay{-\Phi + \sum_i \log 2q_i \yay{\frac{p_i}{q_i}}^{p_i}} \\ 
			&= \exp\yay{\sum_i \log 2\sqrt{q_i p_i}},
		\end{align*}
		yielding the asserted bound.
	\end{proof}
	It is important to note that this bound is the sharpest possible using Chernoff bounding technique and it is a direct consequence of the weight function $w(p) = \log\nicefrac{p}{q}$. 
	\section{Proof of Theorem \ref{limitperformance}}
	\label{app:PNBproof}
	\begin{proof}[Proof of Theorem \ref{limitperformance}]
		Let $\varepsilon_i = p_i-\nicefrac{1}{2}$, hence $\pp{i} = \nicefrac{1}{2}+2\varepsilon_i\epsn{i}$ and observe that
		\begin{equation*}
			\frac{\pp{i}}{1-\pp{i}} = \frac{4\varepsilon_i\epsn{i}+1}{1-4\varepsilon_i\epsn{i}} = \frac{1+2\varepsilon_{i}+\left(\nicefrac{1}{2\epsn{i}}-1\right)}{1-2\varepsilon_{i}+\left(\nicefrac{1}{2\epsn{i}}-1\right)}.
		\end{equation*} 
		Then, the ratio of the weights of PNB and NB decision rules are as follows: 
		\begin{align*}
			1
			&\geq \frac{\log\frac{4\varepsilon_i\epsn{i}+1}{1-4\varepsilon_i\epsn{i}}}{\log\frac{2\varepsilon_{i}+1}{1-2\varepsilon_{i}}} = \frac{\log\frac{1+2\varepsilon_{i}+\left(\nicefrac{1}{2\epsn{i}}-1\right)}{1-2\varepsilon_{i}+\left(\nicefrac{1}{2\epsn{i}}-1\right)}}{\log\frac{1+2\varepsilon_{i}}{1-2\varepsilon_{i}}} \\
			&\geq 1- \left(\nicefrac{1}{2\epsn{i}}-1\right)\underbrace{\frac{4\varepsilon_i}{\left(1-4\varepsilon_i^2\right)\log\frac{1+2\varepsilon_{i}}{1-2\varepsilon_{i}}}}_{C(\abs{\varepsilon_i})}
		\end{align*}
		The inequality follows from the Taylor series expansion of $\log\frac{1+a+x}{1-a+x}$ with respect to variable $x$ around $x=0$, which corresponds to $\epsn{i}\approx \nicefrac{1}{2}$. Observing that $\epsn{i}\geq a(N)-\nicefrac{1}{2}$, $\forall N, i$ and $C(\varepsilon_i)\equiv C(\abs{\varepsilon_i})$ is monotonic in $\abs{\varepsilon_i}$ yield that:
		\begin{equation}
			\label{importantratio}
			\frac{\tilde{\Phi}}{\Phi}= \frac{\sum_{i=1}^{N}\left(\varepsilon_{i}\log\frac{4\varepsilon_i\epsn{i}+1}{1-4\varepsilon_i\epsn{i}}\right)}{\sum_{i=1}^{N}\varepsilon_{i}\log\frac{2\varepsilon_{i}+1}{1-2\varepsilon_{i}}} \geq 1- C(\nicefrac{1}{2}-\gamma) \frac{1-a(n)}{a(n)-\nicefrac{1}{2}}.
		\end{equation}
	\end{proof}
	\subsection{Proof of Corollary \ref{fixedsizepotential}}
	\begin{proof}[Proof of Corollary \ref{fixedsizepotential}]
		Similar to the proof of Theorem \ref{limitperformance}, consider the ratio in \eqref{importantratio} and observe that:
		\begin{equation*}
			C(\nicefrac{1}{2}-\gamma) \left(\nicefrac{1}{2\epsn{i}}-1\right) \leq \delta, \forall i,
		\end{equation*}
		ensures that $\frac{\tilde{\Phi}}{\Phi}\geq 1-\delta$. Change of variables $\epsn{i} = \pn{i}-\nicefrac{1}{2}$ concludes proof.
	\end{proof}
	\subsection{Proof of Equation \eqref{performancePNB}}
	\label{PNBperf}
	The following proof is based on \cite[Theorem 1]{berend2015finite}. We go over the algebraic manipulations necessary in order to prove \eqref{performancePNB}. We first show that:
	\begin{equation}
		\label{upp}
		\prob{f^{PNB}(\mathbf{X})\neq Y} \leq \exp\left(-\frac{\tilde{\Phi}}{2}\right)
	\end{equation}
	Observe that allowing $w(\pp{i})\triangleq \frac{\pp{i}}{1-\pp{i}}$ and $\xi \triangleq \ind{X_i = Y}\sim \mathcal{B}(p_i)$:
	\begin{align*}
		&\prob{f^{PNB}(\mathbf{X})\neq Y}= \\ 
		& \prob{\sum \xi_i w(\pp{i}) - \expt{\sum \xi_i w(\pp{i})}\leq -\sum \left(\frac{1}{2}-p_i\right)w(\pp{i})}
	\end{align*}
	Subsequent application of Kearns-Saul inequality yields \eqref{upp}. The use of Kearns-Saul inequality yields sufficiently sharp bounds for the performance of NB (and PNB) decision rules and it is discussed in detail \cite{berend2015finite}. Next, a lower bound is needed:
	\begin{equation*}
		\prob{f^{PNB}(\mathbf{X})\neq Y} \geq \frac{\nicefrac{3}{4}}{1+\exp\left(2\tilde{\Phi}+4\sqrt{\tilde{\Phi}}\right)}.
	\end{equation*}
	Let $\eta_i \triangleq 2\ind{X_i=Y}-1$ and observe that:
	\begin{align*}
		\expt{\sum \eta_iw(\pp{i})} &= \sum(p_i-q_i)w(\pp{i}) = 2\tilde{\Phi} \\
		\var\left(\sum \eta_iw(\pp{i}) \right) &=  \sum p_iq_iw(\pp{i})^2 \leq 4\tilde{\Phi}
	\end{align*}
	The upper bound on the variance is not straightforward. It follows from $w(\pp{i})\leq w(p_i)$ (which holds $\forall i$ such that $\pn{i}>\nicefrac{1}{2}$ and from consistency $\forall N>N^{*}$ for some $N^{*}$ it holds $\forall i$) and \cite[Lemma 4]{berend2015finite}. The rest follows from observing that:
	\begin{equation*}
		\exp(\sum \eta_iw(\pp{i})) = \prod_{i: \eta_i=1} \frac{\pp{i}}{1-\pp{i}}\prod_{i: \eta_i=-1} \frac{1-\pp{i}}{\pp{i}},
	\end{equation*}
	and repeating the exact same steps as the proof of \cite[Theorem 1(ii)]{berend2015finite}, which we will not repeat here (the fact that $\min\lb \pp{i},1-\pp{i}\rb \geq \min\lb p_i,1-p_i\rb$ is useful). 
	\section{Proof of Proposition \ref{lowsamplingconsistency}}
	\label{app:prop_consistency_low}
	\begin{proof}[Proof of Proposition \ref{lowsamplingconsistency}]
		The proof follows from the Hoeffding's inequality, since for any weighted mixture:
		\begin{equation*}
			\prob{\sum_{i=1}^{N}w_i \eta_i < 0} \leq \exp\left(-\frac{8}{N}\left(\sum_{i=1}^{N}\varepsilon_{i}w_{i}\right)^2\right)
		\end{equation*}
		The definition of the pseudo-competence \eqref{pseudocomp} and $w_i = \yay{\pp{i}-\nicefrac{1}{2}}$ yield that the following is sufficient for the consistency of the rule \eqref{lightload}:
		\begin{equation*}
			\frac{1}{\sqrt{N}}\sum_{i=1}^{N}2\varepsilon_{i}^2\epsn{i} \rightarrow \infty \Rightarrow \prob{f^{L}(\Xvec{})\neq Y} \rightarrow 0.
		\end{equation*}
		Similarly by allowing $w_i =1$, $\forall i$, the following is sufficient for a committee to be not asymptotically weak under majority vote:
		\begin{equation*}
			\frac{1}{\sqrt{N}}\sum_{i=1}^{N}\varepsilon_{i} \geq \sqrt{\frac{\log 2}{8}} \Rightarrow \exp\left(-\frac{8}{N}\left(\sum_{i=1}^{N}\varepsilon_{i}\right)^2\right)\leq \frac{1}{2},
		\end{equation*} 
		which ensures that $\exists N^{*}$ such that $\forall N> N^{*}$, $\epsn{i}\geq \delta > 0$, $\forall i$, yielding that:
		\begin{equation*}
			\sum_{i=1}^{N}2\varepsilon_{i}^2\epsn{i}\geq \delta\sum_{i=1}^{N}2\varepsilon_{i}^2 \rightarrow \infty, 
		\end{equation*} 
		as long as $\lim_{N\rightarrow\infty}  \frac{1}{\sqrt{N}}\sum_{i=1}^{N}(p_{i}-\nicefrac{1}{2})^2 = \infty$, which concludes that empirical PNB decision rule is consistent.
	\end{proof} 
	\section{Proof of Theorem \ref{highsampling}}
	\begin{proof}[Proof of Lemma \ref{deviation}]
		Consider $\left|w_i-\tilde{w}_i\right|$ for an absolutely balanced committee; the following holds $\forall p_i\in\left(\gamma,1-\gamma\right)$:
		\begin{align*}
			&\abs{\log\frac{1+2\varepsilon_{i}}{1-2\varepsilon_{i}} - \log\frac{4\varepsilon_i\epsn{i}+1}{1-4\varepsilon_i\epsn{i}}} \\
			&\leq \left(\nicefrac{1}{2\epsn{i}}-1\right) \frac{4\varepsilon_i}{\left(1-4\varepsilon_i^2\right)} \leq \left(\nicefrac{1}{2\epsn{i}}-1\right) \frac{\nicefrac{1}{2}-\gamma}{\gamma(1-\gamma)}
		\end{align*}
		As long as the right hand side is upper bounded by $\frac{\epsilon}{2}$, $\norm{\mathbf{w}-\tilde{\mathbf{w}}}_1<\frac{\epsilon N}{2}$.
	\end{proof}
	
	\begin{proof}[Proof of Theorem \ref{highsampling}]
		This proof is an extension of \cite[Theorem 11]{berend2015finite}. Consider the following:
		\begin{align*}
			\left|\mathbf{w}\cdot\bm{\eta}-\mathbf{\tilde{w}}(T)\cdot\bm{\eta}\right| 
			&= \left|\mathbf{w}\cdot\bm{\eta}-\mathbf{\tilde{w}}\cdot\bm{\eta}+\mathbf{\tilde{w}}\cdot\bm{\eta}-\mathbf{\tilde{w}}(T)\cdot\bm{\eta}\right| \\ 
			&\hspace{-1em}\leq\left|\mathbf{w}\cdot\bm{\eta}-\mathbf{\tilde{w}}\cdot\bm{\eta}\right|+\left|\mathbf{\tilde{w}}\cdot\bm{\eta}-\mathbf{\tilde{w}}(T)\cdot\bm{\eta}\right|\\
			&\hspace{-1em}\leq\norm{\mathbf{w}\cdot\bm{\eta}-\mathbf{\tilde{w}}\cdot\bm{\eta}}_1 + \norm{\mathbf{\tilde{w}}\cdot\bm{\eta}-\mathbf{\tilde{w}}(T)\cdot\bm{\eta}}_1
		\end{align*}
		The first inequality follows from the triangle inequality and the second inequality follows from the H\"older's inequality, then, \cite[eqn. (41)]{berend2015finite} yields that:
		\begin{align*}
			&\prob{\mathbf{\tilde{w}}(T)\cdot\bm{\eta}\leq 0} \leq \prob{\mathbf{w}\cdot\bm{\eta}\leq \epsilon N} \\
			&+   \prob{\norm{\mathbf{w}\cdot\bm{\eta}-\mathbf{\tilde{w}}\cdot\bm{\eta}}_1 + \norm{\mathbf{\tilde{w}}\cdot\bm{\eta}-\mathbf{\tilde{w}}(T)\cdot\bm{\eta}}_1>\epsilon N}
		\end{align*}
		As long as a committee satisfies the condition in Lemma \ref{deviation}, this upper-bound boils down to:
		\begin{align*}
			\prob{\mathbf{\tilde{w}}(T)\cdot\bm{\eta}\leq 0} &\leq \prob{\mathbf{w}\cdot\bm{\eta}\leq \epsilon N} \\ &+ \prob{\norm{\mathbf{\tilde{w}}\cdot\bm{\eta}-\mathbf{\tilde{w}}(T)\cdot\bm{\eta}}_1>\nicefrac{\epsilon N}{2}}
		\end{align*}
		Now, \cite[Corollary 10]{berend2015finite} yields that $\forall \delta \in (0,1)$ and $\forall i\in[N]$ as long as  
		\begin{equation}
			\label{condition}
			T\min\lb \pp{i}, (1-\pp{i})\rb \geq 3\left(\frac{4}{\sqrt{4\epsilon+1}-1}\right)^2 \log\frac{8N}{\delta},
		\end{equation}
		The probability that empirical pseudo weights deviate from pseudo weights are bounded:
		\begin{equation*}
			\prob{ \norm{\mathbf{\tilde{w}}\cdot\bm{\eta}-\mathbf{\tilde{w}}(T)\cdot\bm{\eta}}_1>\frac{\epsilon N}{2}} < \delta
		\end{equation*} 
		Finally, by Property \ref{pseudopropmixed}, $\min\lb p_i, (1-p_i)\rb$ satisfying \eqref{condition} yields that:
		\begin{equation*}
			\prob{f^{H}(\mathbf{X})\neq Y} \leq \delta + \exp\left[-\frac{\left(2\Phi -\epsilon N\right)^2}{8\Phi}\right].
		\end{equation*}
		Observe that Lemma \ref{deviation} and eqn. \eqref{condition} are connected to consistency and absolute balance conditions respectively. Therefore, consider a consistent committee with rate $a(N)$ and observe that Lemma \ref{deviation} holds as long as:
		\begin{equation}
			\label{datdat}
			\frac{\epsilon R(\gamma)}{2} > \rho(N).
		\end{equation}
		Observing that \eqref{condition} is merely absolute balance condition with $\gamma =  \frac{3}{T}\log\frac{8N}{\delta}\left(\frac{4}{\sqrt{4\epsilon+1}-1}\right)^2$, plugging into \eqref{datdat}, and taking Taylor series expansion yields (a long algebraic manipulation that we skip here) yields that Lemma \ref{deviation} holds as long as:
		\begin{equation*}
			\epsilon > \left(\rho(N)\frac{12}{T}\log\frac{8N}{\delta}\right)^{\nicefrac{1}{3}}.
		\end{equation*}
		Defining $C(\delta;N,T) = \frac{12}{T}\log\frac{8N}{\delta}$ concludes the proof.
	\end{proof}
	\section{Proof of Theorem \ref{online}}
	\begin{proof}[Proof of Theorem \ref{online}]
		Due inter-worker and inter-task independence, the empirical pseudo na\"{i}ve Bayes decision rule at any given time, $\tau\in[T]$ evolves in a well-defined filtration. Hence, $f^{H}_{\tau}$, the empirical decision rule using weights $\tilde{w}(\tau)$, obeys $\forall t\in (\tau+1,...,T)$:
		\begin{align*}
			&\prob{R(\tau)\cap \lb f^{H}(\mathbf{X}(t>\tau)\neq Y)\rb} \\ &\hspace{2cm}= \mathbb{P}_{\mathbf{X}_{1}^{\tau},\bm{\eta}}\left(R(\tau)\cap \lb \tilde{\mathbf{w}}(\tau)\cdot\bm{\eta}\leq 0\rb\right)\\ 
			&\hspace{2cm}= \mathbb{E}_{\mathbf{X}_{1}^{\tau}}\left[\ind{R(\tau)} \mathbb{P}_{\bm{\eta}}\left( \tilde{\mathbf{w}}(\tau)\cdot\bm{\eta}\leq 0\right)\right]  
		\end{align*}
		It is important to observe that as long as the tasks are static, these probabilities are a function of $\tau$ and the committee profile $\mathbf{p}$. Let $\tilde{\bm{\eta}}_{\tau}$ be a random vector with elements distributed independently with Bernoulli $\pp{i}(\tau)$ and denote $\Delta(\tau) \triangleq \sum_{i=1}^{N}\left|p_i-\pp{i}(\tau)\right|$ In other words, it is a random vector with a pseudo committee profile. A standard tensorization result from \cite{kontorovich2012obtaining,berend2015finite} yields:
		\begin{equation*}
			\left|\mathbb{P}_{\bm{\eta}}\left( \tilde{\mathbf{w}}(\tau) \cdot\bm{\eta}\leq 0\right)-\mathbb{P}_{\tilde{\bm{\eta}}_{\tau}}\left( \tilde{\mathbf{w}}(\tau) \cdot\tilde{\bm{\eta}}_{\tau}\leq 0\right)\right|\leq \Delta(\tau)
		\end{equation*}
		Then $\forall \tau\in[T]$, $\mathbb{P}_{\tilde{\bm{\eta}}_{\tau}}\left( \tilde{\mathbf{w}}(\tau)\cdot\tilde{\bm{\eta}}_{\tau}\leq 0\right)$ is the probability of error for the na\"{i}ve Bayes decision rule of committee strength $\tilde{\Phi}(\tau) \triangleq \sum_{i=1}^{N}\left(\pp{i}(\tau)-\frac{1}{2}\right)\log\frac{\pp{i}(\tau)}{1-\pp{i}(\tau)}$. Therefore, from \cite{berend2015finite}:
		\begin{equation*}
			\mathbb{P}_{\tilde{\bm{\eta}}_{\tau}}\left( \tilde{\mathbf{w}}(\tau)\cdot\tilde{\bm{\eta}}_{\tau}\leq 0\right) \leq \exp\left(-\frac{1}{2}\tilde{\Phi}\left(\tau\right)\right)
		\end{equation*}
		Hence, $\mathbb{P}_{\bm{\eta}}\left( \tilde{\mathbf{w}}^{HS} (\tau) \cdot\bm{\eta}\leq 0\right)\leq\Delta(\tau)+ \exp\left(-\frac{1}{2}\tilde{\Phi}\left(\tau\right)\right)$. Then, by the triangle inequality with a mean absolute deviation estimate from \cite{berend2013sharp,berend2015finite}, we see that:
		\begin{equation*}
			\mathbb{E}_{\mathbf{X}_{1}^{\tau}}\left[\Delta(\tau)\right] \leq \sum_{i=1}^{N}\left|p_i-\pp{i}\right| + \frac{N}{\sqrt{T}}
		\end{equation*}
		This concludes the proof.
	\end{proof}
	\bibliographystyle{IEEE_ECE}
	\bibliography{expertsbib}
\end{document}